\newtheorem{theorem}{Theorem}
\newtheorem{corollary}{Corollary}
\newtheorem{lemma}[theorem]{Lemma}
\newcommand\munderbar[1]{%
  \underaccent{\bar}{#1}}
\setlist[itemize]{noitemsep, nolistsep}
\newcolumntype{P}[1]{>{\raggedright\arraybackslash}p{#1}}
\newcommand{\iid}{i.i.d.~}
\definecolor{codegray}{gray}{0.1}
\newcommand{\code}[1]{{\texttt{#1}}}
\titleformat{\chapter}[block]
  {\normalfont\LARGE\bfseries}{\thechapter.}{1em}{\LARGE}
\titlespacing*{\chapter}{0pt}{-0pt}{15pt}
\newcommand{\defeq}{\vcentcolon=}
\DeclareMathOperator*{\argmax}{arg\,max}
\newcommand{\R}{\mathbb{R}}
\newcommand{\KL}{\textrm{KL}}
\newcommand{\E}{\mathbb{E}}
\newcommand{\tr}{\normalfont \text{tr}}
\newcommand{\logdet}{\normalfont{\text{logdet}}}
\newcommand{\LSE}{\normalfont \text{LSE}}
\newcommand{\sm}{\normalfont \text{softmax}}
\newcommand{\bigO}{\mathcal{O}} 
\newcommand{\x}{\bm{x}} 
\newcommand{\y}{\bm{y}} 
\newcommand{\z}{\bm{z}} 
\newcommand{\p}{{\bm{\xi}}} 
\newcommand{\hp}{{\bm{\omega}}} 
\newcommand{\feat}{\bm{\phi}} 
\newcommand{\dirscale}{{\bm{\alpha}}} 
\newcommand{\data}{D} 
\newcommand{\stats}{\bm{\eta}} 
\newcommand{\reg}{\mathcal{R}} 
\newcommand{\classp}{\bm{\rho}} 
\newcommand{\probsimp}{\mathcal{P}} 
\newcommand{\ep}{{\bm{\varepsilon}}} 
\newcommand{\loss}{{\mathcal{L}}} 
\newcommand{\Sep}{\Sigma}
\newcommand{\sep}{\sigma}
\newcommand{\param}{{\bm{w}}} 
\newcommand{\Param}{{{W}}} 
\newcommand{\mparam}{{\bar{\bm{w}}}} 
\newcommand{\mParam}{{\bar{W}}} 
\newcommand{\cov}{S} 
\newcommand{\weights}{{\bm{\theta}}} 
\newcommand{\Weights}{\Theta} 
\newcommand{\priorparam}{\munderbar{\bar{\bm{w}}}} 
\newcommand{\priorParam}{\munderbar{\bar{W}}} 
\newcommand{\mfeat}{\bm{\mu}} 
\newcommand{\genmean}{\bar{\bm{\mu}}} 
\newcommand{\priorfeat}{\munderbar{\bar{\bm{\mu}}}} 
\newcommand{\priorSep}{\munderbar{\Sigma}}
\newcommand{\priorcov}{\munderbar{S}} 
\newcommand{\priorscale}{\munderbar{\bm{\alpha}}} 
\newcommand{\xdim}{N_x}
\newcommand{\ydim}{N_y}
\newcommand{\phidim}{N_\phi}
\newcommand{\N}{\mathcal{N}}
\newcommand{\MN}{\mathcal{MN}}
\title{Variational Bayesian Last Layers}
\author{
  James Harrison$^1$, John Willes$^2$, Jasper Snoek$^1$\\
  $^1$Google DeepMind, $^2$Vector Institute\\
  \texttt{jamesharrison@google.com, john.willes@vectorinstitute.ai,}\\ \texttt{jsnoek@google.com} \\
}
\begin{document}

\maketitle

\begin{abstract}
    We introduce a deterministic variational formulation for training Bayesian last layer neural networks. This yields a sampling-free, single-pass model and loss that effectively improves uncertainty estimation. Our variational Bayesian last layer (VBLL) can be trained and evaluated with only quadratic complexity in last layer width, and is thus (nearly) computationally free to add to standard architectures. We experimentally investigate VBLLs, and show that they improve predictive accuracy, calibration, and out of distribution detection over baselines across both regression and classification. Finally, we investigate combining VBLL layers with variational Bayesian feature learning, yielding a lower variance collapsed variational inference method for Bayesian neural networks.
\end{abstract}

\section{Introduction}

Well-calibrated uncertainty quantification is essential for reliable decision-making with machine learning systems. However, many methods for improving uncertainty quantification in deep learning (including Bayesian methods) have seen limited application due to their relative complexity over standard deep learning. For example, methods such as sampling-based mean field variational inference \citep{blundell2015weight}, Markov chain Monte Carlo (MCMC) methods \citep{papamarkou2022challenges, neal1995bayesian, izmailov2021bayesian}, and comparatively simple heuristics such as Bayesian dropout \citep{gal2016dropout} all have substantially higher computational cost than baseline networks. Single-pass methods (where only one network evaluation is required) often require substantial modifications to network architectures, regularization, or training and evaluation procedures, even for the simplest such models \citep{liu2022simple, wilson2016deep, kristiadi2021learnable}. 

In this work, we take a simplicity-first approach to Bayesian deep learning, and develop a conceptually simple and computationally inexpensive partially Bayesian neural network. 
In particular, we investigate variational learning of Bayesian last layer (BLL) neural networks. 
While BLL models consider only the uncertainty over the output layer of the network, they have been shown to perform comparably to more complex Bayesian models \citep{watson2021latent, harrison2018meta, fiedler2023improved, kristiadi2020being}.
Our variational formulation relies on a deterministic lower bound on the marginal likelihood, which enables highly-efficient mini-batch, sampling-free loss computation, and is thus highly scalable. 

\textbf{Contributions.} Concretely, the contributions of this work are:
\begin{itemize}[leftmargin=.5cm]
    \item We present variational Bayesian last layers (VBLLs), a novel last layer neural network component for uncertainty quantification which can be straightforwardly included in standard architectures and training pipelines (including fine-tuning), for both deterministic and Bayesian neural networks. 
    \item We derive principled and sampling-free Bayesian training objectives for VBLLs, and show that with careful parameterization they can be computed at the same cost as standard training, and trained with standard mini-batch training. 
    \item We show that VBLLs improve predictive accuracy, likelihoods, calibration, and out of distribution detection across a wide variety of problem settings. We also show VBLLs strongly outperform baseline models in contextual bandits. 
    \item We release an \href{https://github.com/VectorInstitute/vbll}{easy-to-use package} providing efficient VBLL implementations in PyTorch.
\end{itemize}

\section{Bayesian Last Layer Neural Networks}
\label{sec:bll}

We first review Bayesian last layer models which maintain a posterior distribution only for the last layer in a neural network. These models correspond to Bayesian (linear or logistic) regression or Bayesian Gaussian discriminant analysis (for each of the three models we present, respectively) with learned features. 
We assume $T$ total data points, and write inputs as $\x \in \R^{\xdim}$. For regression, outputs are $\y \in \R^{\ydim}$; for classification, outputs are $y \in \{1, \ldots, \ydim\}$, and $\y$ denotes the $\ydim$-dimensional one-hot representation. 
For all models discussed in this section, we will use neural network features $\feat: \R^{\xdim} \times \Weights \to \R^{\phidim}$. These correspond to all parts of a network architecture but the last layer, where $\weights \in \Weights$ denotes the weights of the neural network.
We will typically write $\feat \defeq \feat(\x, \weights)$ for notational convenience and refer to these parameters as \emph{features} because they define the map from inputs to the feature embedding on which the BLL operates. 

\subsection{Regression}
\label{sec:reg}

The canonical BLL model for the regression case\footnote{We present scalar-output regression in the paper body and defer the multivariate output case to the appendix.} is
\begin{equation}
    \y =  \param^\top \feat(\x, \weights)  + \ep
    \label{eq:alpaca}
\end{equation}
where $\ep$ is assumed to be normally distributed with zero mean and covariance $\Sep$, and these noise terms are \iid across realizations. 
We specify a Gaussian prior\footnote{Throughout the paper, we use overbars to denote mean parameters and underbars to denote prior parameters.} $p(\param) = \N(\priorparam, \priorcov)$,
assumed independent of the noise $\ep$. Posterior inference in the BLL model is analytically tractable for a fixed set of features. The marginal likelihood may be computed either via direct computation or by iterating over the dataset. Fixing a distribution over $\param$ of the form $\N(\bar{\param}, \cov)$, the predictive distribution is 
\begin{equation}
    \label{eq:postpred_reg}
    p(\y\mid \x, \stats, \weights) = \N(\mparam^\top \feat, \feat^\top \cov \feat + \Sep)
\end{equation}
where $\stats$ denotes the parameters of the distribution, here $\stats = (\bar{\param}, \cov)$.

\subsection{Discriminative Classification}
\label{sec:dis_class}

In this subsection we introduce a BLL model that corresponds to standard classification neural networks, where
\begin{align}
    p(\y \mid \x, \Param, \weights) = \sm(\z), \quad \quad \z =  \Param \feat(\x, \weights) + \ep
    \label{eq:dis_class}
\end{align}
where $\z \in \R^{\ydim}$ are the logits. These are also interpreted as unnormalized joint data-label log likelihoods \citep{grathwohl2019your}, where
\begin{equation}
    \z = \log p(\x, \y \mid \Param, \weights) - Z(\Param, \weights)
\end{equation}
where $Z(\Param, \weights)$ is a normalizing constant, independent of the data. 
The term $\ep \in \R^{\ydim}$ is a zero-mean Gaussian noise term with variance $\Sep$. Typically in logistic regression this noise term is ignored, although it has seen use to model label noise \citep{collier2021correlated}. We include it to unify the presentation, and the variance can be assumed zero as necessary. 

As in the regression case, we specify a Gaussian prior for $\Param$. 
In contrast with the regression setting, exact inference and computation of the posterior predictive is not analytically tractable in this model. 
We refer to this model---consisting of multinominal Bayesian logistic regression on learned neural network features---as discriminative classification, as logistic regression is a classical discriminative learning algorithm. 

\subsection{Generative Classification}
\label{sec:gen_class}

The second classification model we consider is the \textit{generative classification} model \citep{harrison2019continuous, zhang2021shallow, willes2021bayesian}, so-called due to its similarity to classical generative models such as Gaussian discriminant analysis. 
In this model, we assume that the features associated with each class are normally distributed. Placing a Normal prior on the means of these feature distributions and a (conjugate) Dirichlet prior on class probabilities, we have priors and likelihoods (top line and bottom line respectively) of the form
\begin{align}
&\classp \sim \mathrm{Dir}(\priorscale) &&\mfeat_{\y}\sim \N( \priorfeat_{\y}, \priorcov_{\y})\\
&\y\mid \classp \sim \mathrm{Cat}(\classp) &&\feat \mid \y\sim \N( \mfeat_{\y}, \Sep).
\end{align}
In this model, $\priorfeat_{\y} \in \R^{\phidim}$ and $\priorcov_{\y} \in \R^{\phidim\times\phidim}$ are the prior mean and covariance over $\mfeat_{\y} \in \R^{\phidim}$, the mean embedding for each. The subscript here indexes the statistics for each class; 
we also write $\mfeat \defeq \{\mfeat_1, \ldots, \mfeat_{\ydim}\}$ to terms for all $\y$. 
The terms $\classp \in \probsimp_{\ydim}$ correspond to class probabilities, where $\probsimp_{\ydim}$ denotes the probability simplex embedded in $\R^{\ydim}$. These class probabilities are in turn used in the categorical distribution over the class.  

For a distribution over model parameters
\begin{align}
    p(\classp, \mfeat\mid \stats) = \mathrm{Dir}(\dirscale) \prod_{k=1}^{\ydim} \N(\genmean_k,  \cov_k)
\end{align}
for which we write $\stats = \{\dirscale, \genmean, \cov\}$, 
we have
\begin{align}
    p(\x \mid \y, \stats) = \N(\genmean_{\y}, \Sep + \cov_{\y}), \quad\quad  p(\y \mid \stats) = \frac{\dirscale_{\y}}{\sum_{k=1}^{\ydim} \dirscale_k}
\end{align}
via analytical marginalization. To compute the predictive over class labels, we apply Bayes' rule, yielding
\begin{equation}
    \label{eq:genclass_pred}
    p(\y \mid \x, \stats) = \sm_{\y}(\log p(\x \mid \y, \stats) + \log p(\y \mid \stats)).
\end{equation}
Here, 
\begin{equation}
    \log p(\x \mid \y, \stats) = -\frac{1}{2} ((\feat - \genmean_{\y})^\top (\Sep + \cov_{\y})^{-1} (\feat - \genmean_{\y}) + \logdet(\Sep + \cov_{\y}) + c)
\end{equation}
where $c$ is a constant, shared for all classes, that may be ignored due to the shift-invariance of the softmax. Grouping the log determinant term with the class prior yields a bias term. Instead of a linear transformation of the input features to obtain a class logit, we instead have a quadratic transformation.
This formulation is a strict generalization of standard classifier architectures \citep{harrison2021uncertainty}, in which we have quadratic decision regions as opposed to linear ones. 

\begin{figure}
    \centering
    \includegraphics[width=0.35\linewidth]{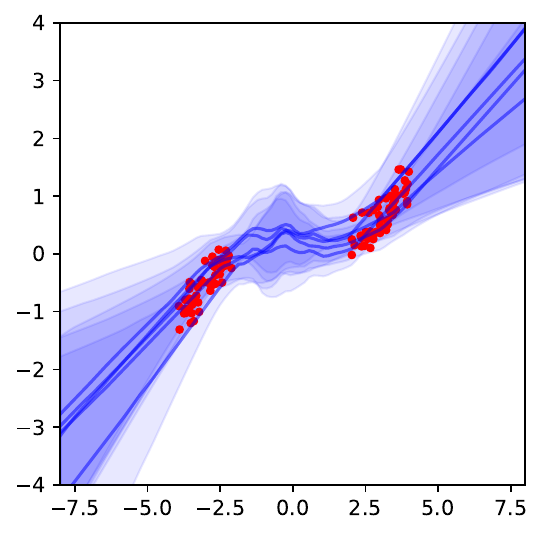}\quad
    \includegraphics[width=0.45\linewidth]{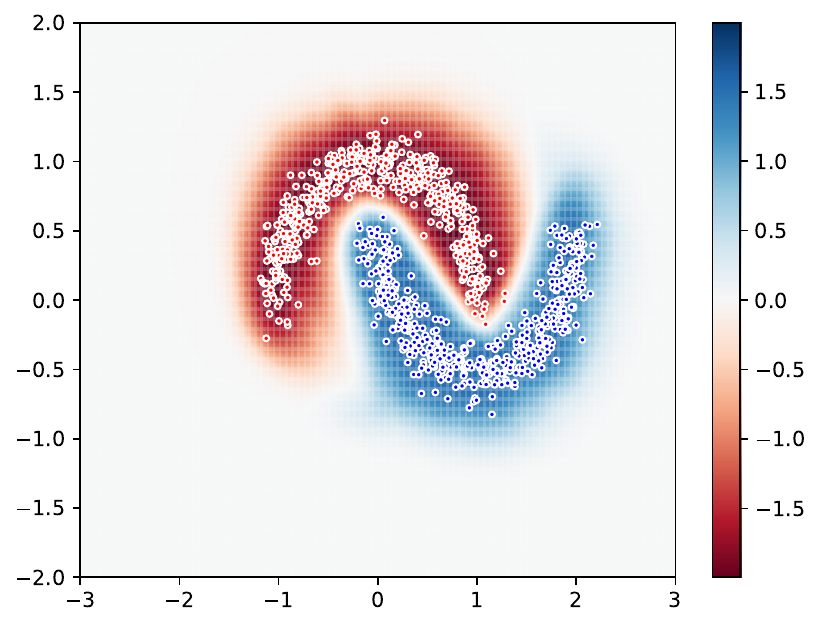}
    \caption{\textbf{Left}: A variational BLL (VBLL) regression model with BBB features trained on 50 data points generated from a cubic function with additive Gaussian noise. The plot shows the 95\% predictive credible region under the variational posterior for several sampled feature weights.  \textbf{Right}: Visualizing (re-scaled) $p(\x \mid \y = 1) - p(\x \mid \y = 0)$ predicted by a generative VBLL model on the half moon dataset, shows good sensitivity to Euclidean distance and sensible embedding densities.}
    \label{fig:toy}
\end{figure}

\subsection{Inference and Training in BLL Models}

BLL models have seen growing popularity in recent years, ironically driven in part by a need for compatibility with increasingly deep models \citep{snoek2015scalable, azizzadenesheli2018efficient, harrison2018meta, weber2018optimizing, riquelme2018deep, harrison2019continuous, ober2019benchmarking, kristiadi2020being, thakur2020uncertainty, watson2020neural, watson2021latent, daxberger2021laplace, willes2021bayesian, sharma2022bayesian, schwobel2022last, zhang2021shallow, moberg2019bayesian, fiedler2023improved}. Exact marginalization enables computationally efficient treatment of uncertainty, as well as resulting in lower-variance training objectives compared to sampling-based Bayesian models. 
A common and principled  objective for training BLL models is the (log) marginal likelihood \citep{harrison2018meta}, via gradient descent on
\begin{equation}
T^{-1} \log p(Y \mid X, \weights)
\end{equation}
where $X,Y$ denote stacked data. We include a factor of $T^{-1}$ to enable better comparison with standard, non-Bayesian, training pipelines (typically based on average loss over mini-batches) and across dataset sizes. This training objective can be problematic, however: gradient computation requires computing the full marginal likelihood, and mini-batches do not yield unbiased gradient estimators as in standard training with an arbitrary loss function. Even mini-batch processing of the dataset---iterating between conditioning on mini-batches and prediction under the partial posterior---induces long computation graphs that make training at scale impossible. Moreover, due to the flexibility of neural network features, a full marginal likelihood training objective can result in substantial over-concentration of the approximate posterior \citep{thakur2020uncertainty, ober2021promises}. 

\section{Sampling-Free Variational Inference for BLL Networks}
\label{sec:svi}
To exploit exact marginalization while avoiding full marginal likelihood computation, we will turn to stochastic variational inference \citep{hoffman2013stochastic}. 
In particular, we aim to jointly compute an approximate last layer posterior and optimize network weights by maximizing lower bounds on marginal likelihood. As such, we will avoid distributional assumptions made in the previous section. 
We write the (uncertain) last layer parameters as $\p$ and aim to find an approximate posterior $q(\p \mid \stats)$ parameterized by $\stats$. 
Concretely, throughout this section we will develop bounds of the form 
\begin{equation}
    T^{-1} \log p(Y \mid X, \weights) \geq \loss(\weights, \stats, \Sep) - T^{-1}\KL(q(\p\mid \stats) || p(\p)) \label{eq:elbo}
\end{equation}
where $\loss$ is architecture dependent and developed in the remainder of this section. Thus, practically, the $T^{-1}$ factor weights regularization terms in our training objective. In this section, we index data with $t$ (via subscript), including $\feat_t \defeq \feat(\x_t, \weights)$. 

\subsection{Regression}

We consider the log marginal likelihood $\log p(Y \mid X, \weights)$, with marginalized parameters $\p = \{\param\}$, and have the following lower bound.

\begin{theorem}
\label{thm:reg}
Let $q(\p\mid \stats) = \N(\bar{\param}, \cov)$ denote the variational posterior for the BLL model defined in Section \ref{sec:reg}. Then, \eqref{eq:elbo} holds with 
\begin{equation}
    \loss(\weights, \stats, \Sep) = \frac{1}{T} \sum_{t=1}^T \left( \log \N(\y_{t} \mid \bar{\param}^\top \feat_{t}, \Sep) - \frac{1}{2} \feat_t^\top \cov \feat_t \Sep^{-1} \right).
\end{equation}
\end{theorem}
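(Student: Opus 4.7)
The plan is to apply the standard evidence lower bound (ELBO) / Jensen's inequality argument to $\log p(Y\mid X, \weights)$, using $q(\p\mid\stats)$ as the variational distribution over the last layer $\p = \{\param\}$, and then evaluate the expected log-likelihood term in closed form using the Gaussian structure.

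First, by introducing $q$ and applying Jensen's inequality to the marginalization over $\param$, I obtain the familiar decomposition
\begin{equation*}
\log p(Y\mid X, \weights) \;\geq\; \E_{q}[\log p(Y\mid X, \param, \weights)] - \KL(q(\p\mid\stats)\,\|\,p(\p)).
\end{equation*}
Because observations are conditionally independent given $\param$ and $\weights$, the expected log-likelihood splits as $\sum_{t=1}^T \E_{q}[\log \N(\y_t \mid \param^\top \feat_t, \Sep)]$. Dividing through by $T$ aligns the result with the statement of the theorem, with the KL picking up the $T^{-1}$ factor as written in \eqref{eq:elbo}.

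The core calculation is then to evaluate $\E_{\param\sim\N(\bar{\param},\cov)}[\log \N(\y_t \mid \param^\top \feat_t, \Sep)]$ in closed form. Writing out the scalar log-Gaussian, the only $\param$-dependent piece is the quadratic residual $\Sep^{-1}(\y_t - \param^\top \feat_t)^2$. Its expectation under $q$ decomposes via the standard bias--variance identity: the mean contributes $(\y_t - \bar{\param}^\top \feat_t)^2$, while the variance of the linear functional $\param^\top \feat_t$ under $\N(\bar{\param},\cov)$ is exactly $\feat_t^\top \cov \feat_t$. Recombining the mean piece with the normalizing constant of the Gaussian yields $\log \N(\y_t\mid \bar{\param}^\top \feat_t, \Sep)$, and the variance piece yields the extra $-\tfrac{1}{2}\feat_t^\top \cov \feat_t \Sep^{-1}$ correction. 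Summing over $t$ and dividing by $T$ produces exactly the stated $\loss(\weights,\stats,\Sep)$.

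I do not expect any substantive obstacle: the argument is just Jensen's inequality plus a routine Gaussian expectation. The only things to be careful about are (i) the fact that $\Sep$ is a scalar in the body's presentation (so $\feat_t^\top \cov \feat_t \Sep^{-1}$ is well-defined as a product of scalars, rather than a trace expression one would see in the multivariate case) and (ii) keeping track of the $T^{-1}$ factor so that the regularization weight on the KL matches what appears in \eqref{eq:elbo}. Neither point requires any novel technique.
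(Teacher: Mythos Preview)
Your proposal is correct and matches the paper's own proof essentially step for step: the paper also applies Jensen's inequality to obtain the ELBO, factorizes over data points, and then evaluates $\E_{q}[\log \N(\y_t\mid \param^\top\feat_t,\Sep)]$ in closed form (they package the bias--variance computation as a helper lemma, but it is the identical calculation you outline). There is no meaningful difference in approach.
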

The proof for this result and all others is available in Appendix \ref{sec:proofs}.
When $q(\p\mid \stats) = p(\p\mid Y, X)$
and distributional assumptions are satisfied, this lower bound is tight (this may be shown by direct substitution). This correspondence between the variational and true posterior for appropriately-chosen variational families is well known---see \citet{knoblauch2019generalized} for a thorough discussion. We note that a similar objective for regression models was developed in \citet{watson2021latent}.

\subsection{Discriminative Classification}

In the discriminative classification case, the parameters are $\p =\{\Param\}$. We will assume a diagonal covariance matrix $\Sep$, and write $\sep^2_i \defeq \Sep_{ii}$. We will fix a variational posterior of the form $q(\Param \mid \stats) = \prod_{k=1}^{\ydim} q(\param_{k} \mid \stats) = \prod_{k=1}^{\ydim} q(\mparam_{k}, \cov_{k})$, where $\param_{k}$ denotes the $k$'th row of $\Param$. This factorization retains dense covariances for each class, but sacrifices cross-class covariances. 
While we only present this factorized variational posterior, a similar training objective may be derived with a fully dense variational posterior. Under the variational posterior, we have the following bound on the marginal likelihood. 

\begin{theorem}
\label{thm:dclass}
Let $q(\Param \mid \stats) = \prod_{k=1}^{\ydim} \N(\mparam_k, \cov_k)$ denote the variational posterior for the discriminative classification model defined in Section \ref{sec:dis_class}. Then, \eqref{eq:elbo} holds with 
\begin{equation}
    \loss(\weights,  \stats, \Sep) = \frac{1}{T} \sum_{t=1}^T \left( \y_t^\top \bar{\Param} \feat_t - \LSE_{k}\left[\mparam_k^\top \feat_t + \frac{1}{2} (\feat_t^\top \cov_k \feat_t + \sep^2_k)\right] \right)
\end{equation}
\end{theorem}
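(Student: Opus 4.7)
The plan is to obtain the claimed bound in two stages: first, apply the standard evidence lower bound on $\log p(Y\mid X,\weights)$ with respect to the variational posterior $q(\Param\mid\stats)$; second, bound the expectation of the log-softmax in closed form using an augmented latent-variable trick and Jensen's inequality.

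Stage one. Since the per-datapoint likelihood $p(\y_t\mid\x_t,\Param,\weights)=\E_{\ep_t}[\sm_{\y_t}(\Param\feat_t+\ep_t)]$ already marginalizes the Gaussian noise, a direct ELBO would leave me with the awkward quantity $\E_{q(\Param)}[\log\E_{\ep_t}[\sm_{\y_t}(\Param\feat_t+\ep_t)]]$. I sidestep this by treating $\{\ep_t\}_{t=1}^T$ as additional latent variables alongside $\Param$ and choosing the extended variational distribution $q(\Param)\prod_t p(\ep_t)$. Because the $\ep_t$ factors match their priors, their KL contributions vanish, and the resulting bound collapses to
\begin{equation*}
    \log p(Y\mid X,\weights) \;\geq\; \sum_{t=1}^T \E_{q(\Param)\,p(\ep_t)}\!\left[\log \sm_{\y_t}(\Param\feat_t+\ep_t)\right] - \KL(q(\Param)\,\|\,p(\Param)).
\end{equation*}
Dividing by $T$ aligns this with the form of \eqref{eq:elbo}, so it remains to lower bound each expectation on the right by $\y_t^\top\bar\Param\feat_t - \LSE_k[\mparam_k^\top\feat_t + \tfrac12(\feat_t^\top\cov_k\feat_t+\sep^2_k)]$.

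Stage two. Write $\log\sm_{\y_t}(\z)=\y_t^\top\z-\LSE(\z)$ with $\z\defeq\Param\feat_t+\ep_t$. The linear term is immediate by linearity of expectation: $\E[\y_t^\top\z]=\y_t^\top\bar\Param\feat_t$, since $\E_q[\Param]=\bar\Param$ and $\E[\ep_t]=0$. For the log-sum-exp term I use Jensen's inequality on the concave $\log$ to upper bound
\begin{equation*}
    \E\!\left[\log\textstyle\sum_k e^{z_k}\right] \;\leq\; \log\textstyle\sum_k \E[e^{z_k}].
\end{equation*}
Under $q(\Param)\,p(\ep_t)$ and the factorization $q(\Param)=\prod_k\N(\mparam_k,\cov_k)$, each coordinate $z_k=\param_k^\top\feat_t+\ep_{t,k}$ is univariate Gaussian with mean $\mparam_k^\top\feat_t$ and variance $\feat_t^\top\cov_k\feat_t+\sep^2_k$ (the $\param_k$ and $\ep_{t,k}$ contributions being independent). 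Applying the Gaussian MGF gives $\E[e^{z_k}]=\exp(\mparam_k^\top\feat_t + \tfrac12(\feat_t^\top\cov_k\feat_t+\sep^2_k))$, which turns the inequality above into exactly the desired $\LSE$ upper bound. Substituting back yields the per-term lower bound and, after summing over $t$ and dividing by $T$, the theorem.

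The main obstacle is conceptual rather than computational: the ELBO is not directly tractable because of the composition of softmax with the Gaussian noise integral, so the key move is recognizing that augmenting the latent-variable set with $\ep_t$ (with variational factor equal to the prior) turns the problematic $\log\E_{\ep_t}$ into $\E_{\ep_t}\log$ without changing the KL penalty. Once that is done, the Jensen-plus-MGF calculation is standard, and the diagonal structure of $\Sep$ together with the factorized $q(\Param)$ keeps the per-class variances $\feat_t^\top\cov_k\feat_t+\sep^2_k$ cleanly separated, matching the stated $\loss$.
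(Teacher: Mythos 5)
Your proof is correct and follows essentially the same route as the paper's: an ELBO with respect to the last-layer variational posterior, a second application of Jensen's inequality to move the expectation inside the log-sum-exp, and the Gaussian moment generating function to evaluate $\E[e^{z_k}]$ with per-class variance $\feat_t^\top \cov_k \feat_t + \sep_k^2$. Your explicit augmentation of the latent set with the per-datapoint noise $\ep_t$ (variational factor equal to the prior, so its KL contribution vanishes) is a slightly more careful justification of a step the paper performs implicitly when it takes the expectation of $\exp(\param_k^\top\feat_t+\ep_k)$ over both $\param_k$ and $\ep_k$, but the underlying double-Jensen argument is the same.
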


Here, $\LSE_k(\cdot)$ denotes the log-sum-exp function, with the sum over $k$. In contrast to the regression case, this lower bound is a lower bound on the standard ELBO (due to two applications of Jensen's inequality) and the bound is not tight. We have reduced variance (which would be induced by sampling logit values before the softmax in standard SVI \citep{ovadia2019can}) for bias due to this lower bound. 
Our proof leverages the same double application of Jensen's inequality used by \citet{blei2007correlated}. We note that tighter analytically tractable lower bounds exist for the logistic regression model \citep{depraetere2017comparison, knowles2011non}, although for simplicity of the resulting algorithm we use the above lower bound.

\subsection{Generative Classification}

In the generative classification case, the parameters are $\p = \{ \mfeat, \classp \}$. In this setting, the Dirichlet posterior over class probabilities $p(\classp \mid Y)$ can be computed exactly with one pass over the data by simply counting class occurrences. We therefore only consider a variational posterior of the form $q(\p \mid \stats, Y) = q(\mfeat\mid\stats)$ for the class embeddings, where $q(\mfeat\mid\stats) = \prod_{k=1}^{\ydim} \N(\genmean_{k}, \cov_{k})$. This yields the following lower bound. 

\begin{theorem}
\label{thm:gclass}
Let $q(\mfeat \mid\stats) = \prod_{k=1}^{\ydim} \N(\genmean_k, \cov_k)$ denote the variational posterior over class embeddings for the generative classification model defined in Section \ref{sec:gen_class}. Let $p(\classp \mid Y) = \textrm{Dir}(\dirscale)$ denote the exact Dirichlet posterior over class probabilities, with $\dirscale$ denoting the Dirichlet posterior concentration parameters. 
Then, \eqref{eq:elbo} holds with
\begin{align}
    \loss(\weights,  \stats, \Sep) = \frac{1}{T} \sum_{t=1}^T \Large{(} \log\N(&\feat_t\mid \genmean_{\y_t}, \Sep) - \frac{1}{2} \tr(\Sep^{-1} \cov_{\y_t}) + \psi( \dirscale_{\y_t}) - \psi(\dirscale_*) + \log \dirscale_*\\
    &- \LSE_{k}[\log\N(\feat_t \mid \genmean_k, \Sep + \cov_k) + \log \dirscale_k  \nonumber] \Large{)}
\end{align}
where $\psi(\cdot)$ is the digamma function and where $\dirscale_* = \sum_k \dirscale_k$.
\end{theorem}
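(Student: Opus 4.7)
The plan is to apply the standard ELBO to $\log p(Y \mid X, \weights)$ using the factored variational family $q(\mfeat, \classp \mid \stats, Y) = q(\mfeat \mid \stats)\, p(\classp \mid Y)$, where $p(\classp \mid Y) = \mathrm{Dir}(\dirscale)$ is the exact conjugate posterior and $q(\mfeat \mid \stats) = \prod_k \N(\genmean_k, \cov_k)$. Since the parameters $(\mfeat, \classp)$ have priors independent of $X$ and $\weights$, the marginal equals $p(Y \mid X, \weights) = \E_{p(\mfeat) p(\classp)}[p(Y \mid X, \mfeat, \classp, \weights)]$, and Jensen's inequality gives
\begin{equation*}
\log p(Y \mid X, \weights) \geq \E_q[\log p(Y \mid X, \mfeat, \classp, \weights)] - \KL(q \| p(\mfeat)\,p(\classp)).
\end{equation*}
The joint KL factorizes into $\KL(q(\mfeat)\|p(\mfeat)) + \KL(p(\classp \mid Y)\|p(\classp))$, so the inequality matches \eqref{eq:elbo} under the interpretation that $\p = \{\mfeat, \classp\}$.

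Next I would decompose the per-point likelihood via Bayes' rule for the generative classifier: $\log p(\y_t \mid \feat_t, \mfeat, \classp) = \log \N(\feat_t \mid \mfeat_{\y_t}, \Sep) + \log \classp_{\y_t} - \LSE_k[\log \N(\feat_t \mid \mfeat_k, \Sep) + \log \classp_k]$, then take the expectation of each term under $q$. The Gaussian log-likelihood term gives $\log \N(\feat_t \mid \genmean_{\y_t}, \Sep) - \frac{1}{2}\tr(\Sep^{-1} \cov_{\y_t})$ via the standard second-moment identity for the quadratic form, and the Dirichlet log-moment identity gives $\E_{p(\classp \mid Y)}[\log \classp_{\y_t}] = \psi(\dirscale_{\y_t}) - \psi(\dirscale_*)$.

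The main obstacle is the intractable LSE expectation, which I would handle via a second Jensen application (paralleling the double-Jensen structure noted after Theorem \ref{thm:dclass}). Since $\log$ is concave, $\E_q[\log \sum_k \N(\feat_t \mid \mfeat_k, \Sep)\,\classp_k] \leq \log \E_q[\sum_k \N(\feat_t \mid \mfeat_k, \Sep)\,\classp_k]$, and the product structure of $q$ together with linearity lets me decouple this into $\log \sum_k \E_q[\N(\feat_t \mid \mfeat_k, \Sep)]\cdot\E_q[\classp_k]$. The Gaussian convolution identity yields $\E_q[\N(\feat_t \mid \mfeat_k, \Sep)] = \N(\feat_t \mid \genmean_k, \Sep + \cov_k)$, and the Dirichlet mean yields $\E_q[\classp_k] = \dirscale_k/\dirscale_*$. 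Collecting and re-expressing in LSE form gives the upper bound $\LSE_k[\log \N(\feat_t \mid \genmean_k, \Sep + \cov_k) + \log \dirscale_k] - \log \dirscale_*$; negating to lower-bound the likelihood produces the $+\log \dirscale_*$ offset appearing in the theorem.

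Summing over $t$ and dividing by $T$ then assembles $\loss$ as stated. The key points to check carefully are the direction of each Jensen application (both the ELBO itself and the LSE upper bound rely on concavity of $\log$) and the use of independence of the two factors of $q$ inside the expectation over the sum; the Gaussian and Dirichlet moment computations themselves are routine.
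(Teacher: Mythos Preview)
Your proposal is correct and follows essentially the same route as the paper: an ELBO with the factored posterior $q(\mfeat)\,p(\classp\mid Y)$, the Bayes-rule decomposition of the per-point log-likelihood (which the paper writes as $\log p(X\mid Y,\p)+\log p(Y\mid\p)-\log p(X\mid\p)$, equivalent to your softmax form), the Gaussian second-moment identity and Dirichlet digamma identity for the two tractable terms, and a second Jensen application on the LSE followed by the Gaussian convolution $\E_q[\N(\feat_t\mid\mfeat_k,\Sep)]=\N(\feat_t\mid\genmean_k,\Sep+\cov_k)$ and Dirichlet mean $\E_q[\classp_k]=\dirscale_k/\dirscale_*$. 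The only cosmetic difference is that the paper organizes the argument around $p(X\mid\p)$ rather than the LSE directly, but the computations are identical.
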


Importantly, we note that $\psi( \dirscale_{\y_t}), \psi(\dirscale_*), \log \dirscale_*$ all vanish in gradient computation and may be ignored. The term $\log \dirscale_k$ is the LSE can not be ignored, however. 
This training objective is again a lower bound on the ELBO, and is not tight. 
The first Dirichlet term (in the upper line) vanishes in gradient computation, but the second term inside the log-sum-exp function does not. In the case that the posterior concentration parameters are equal for all classes (as in the case of a balanced dataset), the concentration parameter can be pulled out of the $\LSE(\cdot)$ (due to the equivariance of log-sum-exp under shifts) and can be ignored. 

\subsection{Training VBLL Models}

We propose three methods to learn VBLL models. 

\textbf{Full training.} First, we can jointly optimize the last layer variational posterior together with MAP estimation of the features, yielding combined training objective
\begin{equation}
    \weights^*, \stats^*, \Sep^* = \argmax_{\weights, \stats, \Sep} \left\{ {\loss}(\weights, \stats, \Sep) + T^{-1} ( \log p(\weights)  + \log p(\Sep) - \KL(q(\p\mid \stats) || p(\p)))\right\}. \label{eq:map_training}
\end{equation}
While one may expect this to result in substantial over-concentration for weak feature priors, in practice we observe that stochastic regularization due to mini-batch optimization prevents overconcentration. Throughout this work, we will place simple isotropic zero-mean Gaussian priors on feature weights (yielding weight decay regularization) and a canonical inverse-Wishart prior on $\Sep$. For Gaussian priors (as developed throughout this section) the KL regularization term can be computed in closed form. The prior terms (and the KL penalty) introduce a set of new hyperparameters that may be difficult to select. In Appendix \ref{sec:param}, we discuss these hyperparameters and their interpretation, and provide a reformulation of hyperparameters that increases interpretability. 

\textbf{Post-training.} As an alternative to jointly optimizing the variational last layer with the features, a two step procedure can be used. In this step, the feature weights $\weights$ are trained by any arbitrary training procedure (e.g.~standard neural network training) and the last layer (and $\Sep$) are trained with frozen features. The training objective is identical to \eqref{eq:map_training}, although $\weights^*$ is trained in the initial pre-training step and $\stats^*, \Sep^*$ are trained via \eqref{eq:map_training}. 

\textbf{Feature uncertainty.} Lastly, we can combine last layer SVI with variational feature learning \citep{blundell2015weight}, corresponding to approximate collapsed VI \citep{teh2006collapsed}. This training strategy allows us to construct a variational posterior on the full marginal likelihood, via
\begin{equation}
    \log p(Y \mid X) \geq \E_{q(\p, \weights, \Sep \mid \stats)}[\log(Y\mid X, \p, \weights, \Sep)] - \KL(q(\p, \weights, \Sep \mid \stats)||p(\p, \weights, \Sep)).
\end{equation}
Assuming the prior and variational posterior factorize across the features and last layer, we can partially collapse this expectation
\begin{equation}
    \E_{q(\p, \weights, \Sep \mid \stats)}[\log(Y\mid X, \p, \weights, \Sep)] = \E_{q(\weights, \Sep \mid \stats)} \E_{q(\p\mid \stats)} [\log(Y\mid X, \p, \weights, \Sep)] \geq T \E_{q(\weights, \Sep\mid \stats)} [\loss(\p, \stats, \Sep)]
\end{equation}
and the KL penalty may be similarly decomposed into several terms that can be computed in closed form under straightforward distributional assumptions. In the above, we have included $\Sep$ in the variational posterior, although practically we perform MAP estimation of this covariance under inverse-Wishart priors. Again in this setting, pre-training and post-training steps may be combined, but we do not investigate this case. 

\subsection{Prediction with VBLL Models}

For prediction in VBLL models, we will predict under the variational posterior directly, approximating (for test input/label $(\x, \y)$),
\begin{equation}
    p(\y \mid \x , X, Y) \approx \E_{q(\p\mid \stats^*)}[p(\y \mid \x, \p, \weights^*, \Sep^*)]
\end{equation}
for the deterministic feature model. This expectation may be computed in closed form (for the regression and generative classification model) due to conjugacy, and can be computed via inexpensive last layer sampling in the discriminative classification model. In the variational feature model, 
\begin{equation}
    p(\y \mid \x , X, Y) \approx \E_{q(\weights \mid \stats^*)}\E_{q(\p\mid \stats^*)}[p(\y \mid \x, \p, \weights, \Sep^*)]
\end{equation}
where the inner expectation may be computed exactly and the outer expectation may be approximated via sampling. Further details of training, prediction, and out of distribution detection within all three VBLL models is provided in Appendix \ref{sec:details}.

For both training and prediction, under relatively weak assumptions on covariance matrices, computational complexity (for the classification models) is at most\footnote{Complexity for the regression case is $\bigO(\ydim^2 + \phidim^2)$.} $\bigO(\ydim \phidim^2)$, and can be reduced to $\bigO(\ydim \phidim)$ for diagonal covariances. This matches the complexity of standard network evaluation; for reasonable choices of covariance sparsity, the additional computational cost of VBLL models over standard networks is negligible. 
More details are provided in Appendix \ref{sec:param}.

\section{Related Work and Discussion}

Bayesian methods capable of flexible nonlinear learning have been a topic of active study for the last several decades. Historically, early interest in Bayesian neural networks \citep{mackay1992practical, neal1995bayesian} diminished as Gaussian processes rose to prominence \citep{rasmussen2004gaussian}. In recent years, however, there has been growing interest in methods capable of learning expressive features, effectively quantifying uncertainty, and training efficiently on large datasets. Variational methods have seen particular attention in both neural networks \citep{blundell2015weight, ovadia2019can} and GPs \citep{hensman2013gaussian, titsias2009variational, liu2020gaussian} due to their flexibility and their ability to produce mini-batch gradient estimation training schemes. 

While a wide range of work has aimed to produce more performant approximate Bayesian methods (including more expressive prior and posterior representations \citep{fortuin2021bayesian, izmailov2021bayesian, sun2019functional, wilson2020bayesian}), they have still seen limited application, often due to the increased computational expense of these methods \citep{lakshminarayanan2017simple, dusenberry2020efficient}. 
While some approaches to Bayesian neural networks have focused on improving the quality of the posterior uncertainty through e.g. better priors~\citep{farquhar2020radial, fortuin2022priors} or inference methods~\citep{izmailov2021bayesian}, other lines of work have focused on designing comparatively inexpensive approximate Bayesian methods. Indeed, simple strategies such as Bayesian dropout \citep{gal2016dropout} and stochastic weight averaging \citep{maddox2019simple} have seen much wider use than more expressive methods due to their simplicity. 

One of the simplest Bayesian models is the BLL model that is the focus of this paper, which enables single-pass, often deterministic uncertainty prediction. 
This model has gained prominence through the lens of deep kernel learning \citep{wilson2016deep, wilson2016stochastic, watson2020neural, liu2022simple} and within few-shot learning \citep{harrison2018meta, harrison2019continuous, harrison2021uncertainty, watson2021latent, zhang2021shallow}. Deep kernel learning aims to augment standard neural network kernels with neural network inputs. This approach allows control of the behavior of uncertainty, particularly as a function of Euclidean distance \citep{liu2022simple}. While stochastic variational inference has been applied to these models \citep{wilson2016stochastic}, efficient and deterministic mini-batch methods have not been a major focus. Moreover, classification in these models typically relies on sampling logits applying softmax functions, which increases variance \citep{ovadia2019can, kristiadi2020being, kristiadi2021learnable}, or on Laplace approximation \citep{liu2022simple}.

Within few-shot learning, exact conjugacy of the Bayesian linear regression model \citep{harrison2018meta} and Bayesian GDA \citep{harrison2019continuous, zhang2021shallow, snell2017prototypical} has been exploited for efficient few-shot adaptation. 
These models have (in addition to \citet{van2020uncertainty} among others) shown the strong performance of GDA-based/radial basis function networks, especially on problems such as out of distribution detection, which we further highlight in this work.   
However, training these models (as well as the DKL methods discussed previously) relies on direct computation of the marginal likelihood. 
In contrast to prior work on DKL and few-shot learning, our approach achieves efficient and deterministic training and prediction through our variational objectives and through similarly exploiting conjugacy, and thus the added complexity compared to standard neural network models is minimal.

\begin{table}[t]
\caption{Results for UCI regression tasks. 
}\label{tab:reg1results}

\centering
\resizebox{\textwidth}{!}{
\begin{tabular}{c|cc|cc|cc} 
& \multicolumn{2}{c}{\textsc{Boston}} & \multicolumn{2}{c}{\textsc{Concrete}} & \multicolumn{2}{c}{\textsc{Energy}}\\

& NLL ($\downarrow$) & RMSE ($\downarrow$) & NLL ($\downarrow$) & RMSE ($\downarrow$) & NLL ($\downarrow$) & RMSE ($\downarrow$) \\ 

\hline

VBLL & ${2.55 \pm 0.06}$ & $\bm{2.92 \pm 0.12}$ & ${3.22 \pm 0.07}$ & ${5.09 \pm 0.13}$ & ${1.37 \pm 0.08}$ & ${0.87 \pm 0.04}$\\ 
GBLL & $2.90\pm0.05$ & $4.19\pm0.17$ & $3.09 \pm 0.03$ & $5.01 \pm 0.18$ & $\bm{0.69 \pm 0.03}$ & $\bm{0.46 \pm 0.02}$ \\ 
LDGBLL & $2.60\pm0.04$ & $3.38\pm0.18$ & $\bm{2.97 \pm 0.03}$ & $\bm{4.80 \pm 0.18}$ & $4.80 \pm 0.18$ & $0.50 \pm 0.02$ \\ 
MAP & $2.60\pm0.07$ & $3.02\pm0.17$ & $3.04 \pm 0.04$ & $\bm{4.75 \pm 0.12}$ & $1.44 \pm 0.09$ & $0.53 \pm 0.01$\\ 
RBF GP & $\bm{2.41\pm0.06}$ & $\bm{2.83 \pm 0.16}$ & $3.08 \pm 0.02$ & $5.62 \pm 0.13$ & $\bm{0.66 \pm 0.04}$ & $\bm{0.47 \pm 0.01}$\\ 
\hline
Dropout & $\bm{2.36\pm0.04}$ & $\bm{2.78\pm0.16}$ & $\bm{2.90 \pm 0.02}$ & $\bm{4.45 \pm 0.11}$ & $1.33 \pm 0.00$ & $0.53 \pm 0.01$ \\ 
Ensemble & $2.48\pm0.09$ & $\bm{2.79\pm0.17}$ & $3.04 \pm 0.08$ & $4.55 \pm 0.12$ & $\bm{0.58 \pm 0.07}$ & $\bm{0.41 \pm 0.02}$\\ 
SWAG & $2.64\pm0.16$ & $3.08\pm0.35$ & $3.19 \pm 0.05$ & $5.50 \pm 0.16$ & $1.23 \pm 0.08$ & $0.93 \pm 0.09$ \\ 
BBB & $\bm{2.39\pm0.04}$ & $\bm{2.74\pm0.16}$ & $2.97 \pm 0.03$ & $4.80 \pm 0.13$ & $\bm{0.63 \pm 0.05}$ & ${0.43 \pm 0.01}$ \\ 
VBLL BBB & $2.59 \pm 0.07$ & $3.13 \pm 0.19$ & $3.36 \pm 0.22$ & $5.16 \pm 0.16$ & $1.35 \pm 0.15$ & $0.062 \pm 0.03$\\ 
\hline

\end{tabular}
}
\end{table}

\begin{table}[t]
\caption{Further results for UCI regression tasks.}\label{tab:reg2results}
\centering
\resizebox{\textwidth}{!}{
\begin{tabular}{c|cc|cc|cc} 
& \multicolumn{2}{c}{\textsc{Power}} & \multicolumn{2}{c}{\textsc{Wine}} & \multicolumn{2}{c}{\textsc{Yacht}}\\

& NLL ($\downarrow$) & RMSE ($\downarrow$) & NLL ($\downarrow$) & RMSE ($\downarrow$) & NLL ($\downarrow$) & RMSE ($\downarrow$)\\ 

\hline

VBLL & $\bm{2.73 \pm 0.01}$ & $\bm{3.68 \pm 0.03}$ & $1.02 \pm 0.03$ & $0.65 \pm 0.01$ & $1.29 \pm 0.17$ & $0.86 \pm 0.17$\\ 
GBLL & ${2.77 \pm 0.01}$ & ${3.85 \pm 0.03}$ & $1.02 \pm 0.01$ & $0.64 \pm 0.01$ & $1.67 \pm 0.11$ & $1.09 \pm 0.09$\\ 
LDGBLL & $2.77 \pm 0.01$ & $3.85 \pm 0.04$ & $1.02 \pm 0.01$ & $0.64 \pm 0.01$ & $1.13 \pm 0.06$ & $0.75 \pm 0.10$\\ 
MAP & $2.77 \pm 0.01$ & $3.81 \pm 0.04$ & $0.96 \pm 0.01$ & $0.63 \pm 0.01$ & $5.14 \pm 1.62$ & $0.94 \pm 0.09$\\ 
RBF GP & $2.76 \pm 0.01$ & $3.72 \pm 0.04$ & $\bm{0.45 \pm 0.01}$ & $\bm{0.56 \pm 0.05}$ & $\bm{0.17 \pm 0.03}$ & $\bm{0.40 \pm 0.03}$\\ 
\hline
Dropout & $2.80 \pm 0.01$ & $3.90 \pm 0.04$ & $0.93 \pm 0.01$ & $0.61 \pm 0.01$ & $1.82 \pm 0.01$ & $1.21 \pm 0.13$\\ 
Ensemble & $\bm{2.70 \pm 0.01}$ & $\bm{3.59 \pm 0.04}$ & $0.95 \pm 0.01$ & $0.63 \pm 0.01$ & $0.35 \pm 0.07$ & $0.83 \pm 0.08$\\ 
SWAG & $2.77 \pm 0.02$ & $3.85 \pm 0.05$ & $0.96 \pm 0.03$ & $0.63 \pm 0.01$ & $1.11 \pm 0.05$ & $1.13 \pm 0.20$\\ 
BBB & $2.77 \pm 0.01$ & $3.86 \pm 0.04$ & $0.95 \pm 0.01$ & $0.63 \pm 0.01$ & $1.43 \pm 0.17$ & $1.10 \pm 0.11$\\ 
VBLL BBB & $2.74 \pm 0.01$ & $3.73 \pm 0.04$ & $0.94 \pm 0.03$ & $0.61 \pm 0.01$ & $2.96 \pm 0.59$ & $0.79 \pm 0.05$\\ 
\hline
\end{tabular}
}

\end{table}

\section{Experiments}

We investigate the three VBLL models, with both MAP and variational feature learning, in regression and classification tasks. A full description of all metrics used throughout this section and baseline methods is available in the appendix. To illustrate VBLL models, we show predictions on simple datasets in Figure \ref{fig:toy}. The left figure shows a regression VBLL model with variational features trained on the function $f(x) = c x^3$, with training data shown in red. This figure shows the behavior on so-called \textit{gap} datasets---so named because of the interval between subsets of the data. The VBLL model shows desirable increasing uncertainty between the intervals \citep{foong2019between}. The right figure shows the generative classification model (G-VBLL) on the half-moon dataset. In particular, we visualize the feature density for each class. Importantly, the density has high Euclidean distance sensitivity, which has been advocated by \citet{liu2022simple} as a desirable feature for robustness and out of distribution detection.

\subsection{Regression}

We investigate the performance of the regression VBLL models on UCI regression datasets \citep{Dua:2019}, which are standard benchmarks for Bayesian neural network regression \citep{moberg2019bayesian,ober2019benchmarking,daxberger2021bayesian, watson2021latent, kristiadi2021learnable}. Results are shown in Tables \ref{tab:reg1results}, \ref{tab:reg2results}. We include baseline models run in \citet{watson2021latent}, and we replicate their experimental procedure and hyperparameters as closely as possible (details in the appendix). 

Our experiments show strong results for VBLL models across datasets. Of particular interest is the performance relative to the GBLL model, which is trained directly on the exact marginal likelihood within the Bayesian last layer model. There are several contributing factors: the prior parameters were jointly optimized with the feature weights in the GBLL model, whereas prior terms were fixed in our VBLL model, resulting in a stronger regularization effect. Moreover, exact Bayesian inference can perform poorly under model misspecification \citep{grunwald2017inconsistency}, whereas variational Bayes has comparatively favorable robustness properties and asymptotics \citep{giordano2018covariances, wang2019variational}, although the Gaussian process (GP) model generally also has strong performance across datasets. Finally, directly targeting the marginal likelihood (computed exactly within conjugate models such as BLL models) has been shown to induce substantial overfitting \citep{ober2021promises, thakur2020uncertainty, harrison2021uncertainty}, which the variational approach may avoid due to worse inferential efficiency.

\begin{table}[t]
\caption{Results for Wide ResNet-28-10 on CIFAR-10.}\label{tab:cifar10results}
\centering
\resizebox{\textwidth}{!}{
\begin{tabular}{c|ccccc} 
Method & Accuracy ($\uparrow$) & ECE ($\downarrow$) & NLL ($\downarrow$) & SVHN AUC ($\uparrow$) & CIFAR-100 AUC ($\uparrow$)              \\ 
\hline
DNN   & $95.8 \pm 0.19$  &  $0.028 \pm 0.028$ & $0.183 \pm 0.007$ & $0.946 \pm 0.005$ & $0.893 \pm 0.001$ \\ 
SNGP & $95.7 \pm 0.14$  & ${0.017 \pm 0.003}$ & ${0.149 \pm 0.005}$ & $0.960 \pm 0.004$ & $\bm{0.902 \pm 0.003}$ \\ 
D-VBLL & $\bm{96.4 \pm 0.12}$ & $0.022 \pm 0.001$ &  $0.160 \pm 0.001$  & $\bm{0.969 \pm 0.004}$ & $\bm{0.900 \pm 0.004}$\\
G-VBLL & $\bm{96.3 \pm 0.06}$  & $0.021 \pm 0.001$ & $0.174 \pm 0.002$ & $0.925 \pm 0.015$ & $0.804 \pm 0.006$   \\         
\hline
DNN + LL Laplace & $\bm{96.3 \pm 0.03}$ & $\bm{0.010 \pm 0.001}$ & $\bm{0.133 \pm 0.003}$ & $0.965 \pm 0.010$ & $0.898 \pm 0.001$\\
DNN + D-VBLL & \bm{$96.4 \pm 0.01$} & $0.024 \pm 0.000$ & $0.176 \pm 0.000$ & $0.943 \pm 0.002$ & $0.895 \pm 0.000$ \\
DNN + G-VBLL & $\bm{96.4 \pm 0.01}$ & $0.035 \pm 0.000$ & $0.533 \pm 0.003$ & $0.729 \pm 0.004$ & $0.661 \pm 0.004$ \\
G-VBLL + MAP & $-$ & $-$ & $-$ & $0.950 \pm 0.006$ & $0.893 \pm 0.003$   \\     
\hline
Dropout  & $95.7 \pm 0.13$ & $0.013 \pm 0.002$ & $0.145 \pm 0.004$ & $0.934 \pm 0.004$ & $0.903 \pm 0.001$\\
Ensemble & $\bm{96.4 \pm 0.09}$ & $0.011 \pm 0.092$ & $\bm{0.124 \pm 0.001}$ & $0.947 \pm 0.002$ & $\bm{0.914 \pm 0.000}$\\
BBB & $96.0 \pm 0.08$ & $0.033 \pm 0.001$  & $0.333 \pm 0.014$ & $0.957 \pm 0.004$ & $0.844 \pm 0.013$\\
D-VBLL BBB & ${95.9 \pm 0.15}$  & $0.058 \pm 0.019$ & $0.238 \pm 0.036$ & $0.832 \pm 0.026$ & $0.744 \pm 0.010$\\
G-VBLL BBB & ${95.9 \pm 0.16}$ & $\bm{0.009 \pm 0.001}$ & $0.229 \pm 0.010$ & $0.917 \pm 0.005$ & $0.779 \pm 0.009$\\

\hline
\end{tabular}
}
\end{table}

\begin{table}[t]
\caption{Results for Wide ResNet-28-10 on CIFAR-100.}\label{tab:cifar100results}
\centering
\resizebox{\textwidth}{!}{
\begin{tabular}{c|ccccc} 
Method & Accuracy ($\uparrow$) & ECE ($\downarrow$) & NLL ($\downarrow$) & SVHN AUC ($\uparrow$) & CIFAR-10 AUC ($\uparrow$)              \\ 
\hline
DNN & $80.4 \pm 0.29$  &  $0.107 \pm 0.004$ & $0.941 \pm 0.016$ & $0.799 \pm 0.020$ & $0.795 \pm 0.001$ \\ 
SNGP  & $80.3 \pm 0.23$  & $\bm{0.030 \pm 0.004}$ & $\bm{0.761 \pm 0.007}$ & $\bm{0.846 \pm 0.019}$ & ${0.798 \pm 0.001}$ \\ 
D-VBLL & $\bm{80.7  \pm 0.03}$ & $0.040 \pm 0.002$ &  $0.913 \pm 0.011$  & $\bm{0.849 \pm 0.006}$ & $0.791 \pm 0.003$ \\
G-VBLL & $80.4 \pm 0.10$  & $0.051 \pm 0.003$ & $0.945 \pm 0.009$  & $0.767 \pm 0.055$ & $0.752 \pm 0.015$ \\     
\hline
DNN + LL Laplace & $80.4 \pm 0.29$ & $0.210 \pm 0.018$ & $1.048 \pm 0.014$ & $0.834 \pm 0.014$ & $\bm{0.811 \pm 0.002}$\\
DNN + D-VBLL & \bm{$80.7 \pm 0.02$} & $0.063 \pm 0.000$ & $0.831 \pm 0.005$ & $0.843 \pm 0.001$ & $0.804 \pm 0.001$\\
DNN + G-VBLL & $80.6 \pm 0.02$ & $0.186 \pm 0.003$ & $3.026 \pm 0.155$ & $0.638 \pm 0.021$ & $0.652 \pm 0.025$ \\
G-VBLL + MAP & $-$  & $-$ & $-$ & $0.793 \pm 0.032$ & $0.765 \pm 0.008$   \\   
\hline
Dropout  & $80.2 \pm 0.22$ & $0.031 \pm 0.002$ & $0.762 \pm 0.008$ & $0.800 \pm 0.014$ & $0.797 \pm 0.002$\\
Ensemble  & $\bm{82.5 \pm 0.19}$ & $0.041 \pm 0.002$ & $\bm{0.674 \pm 0.004}$ & $0.812 \pm 0.007$ & $\bm{0.814 \pm 0.001}$\\
BBB  &  $79.6 \pm 0.04$ & $0.127 \pm 0.002$  & $1.611 \pm 0.006$ & $0.809 \pm 0.060$ & $0.777 \pm 0.008$\\
D-VBLL BBB  & $77.6 \pm 0.17$ & $0.041 \pm 0.003$ & $1.169 \pm 0.018$ & $0.785 \pm 0.022$ & $0.756 \pm 0.002$\\
G-VBLL BBB  & $78.1 \pm 0.18$ & $0.046 \pm 0.002$ & $1.156 \pm 0.008$ & $0.832 \pm 0.023$ & $0.742 \pm 0.004$\\

\hline
\end{tabular}
}
\end{table}

\subsection{Image Classification}

To evaluate performance of VBLL models in classification, we train the discriminative (D-VBLL) and generative (G-VBLL) classification models on the CIFAR-10 and CIFAR-100 image classification task. Following \citet{liu2022simple}, all experiments utilize a Wide ResNet-28-10 backbone architecture. We investigate full training methods (without a post-training step), indicated with the method name in the top third of Tables \ref{tab:cifar10results}, \ref{tab:cifar100results}; post-training methods, indicated by pre-training method + post-training method, in the middle third of the Tables; and feature uncertainty, in the bottom third. 

We evaluate out of distribution (OOD) detection performance using the Street View House Numbers (SVHN) \citep{netzer2011reading} as a far-OOD dataset for both datasets, and CIFAR-100 for CIFAR-10 (and vice-versa) as near-OOD datasets. In-distribution data normalization is used in both cases. 
The DNN, BBB, D-VBLL and D-VBLL BBB models use maximum softmax probability \citep{hendrycks2016baseline} as an OOD measure. The G-VBLL and G-VBLL BBB models use a normalized feature density. 
Two methods for this exist: G-VBLL and G-VBLL BBB both use the learned variational posteriors to compute feature likelihoods. However, the performance of this is relatively weak, as there is no guarantee that learned feature likelihoods correspond effectively to true embedding densities. Thus, we also investigate an approach in which we estimate distributions for fixed features after training. This method estimates noise covariances for each class using the trained features, similar to the approach used in \citet{liu2022simple}. We refer to this model as G-VBLL-MAP, as the approach corresponds to MAP noise covariance estimation. These estimated covariances often result in overly-confident predictions, and so we do not advocate for label prediction under these fit covariances, and do not include results for them. 
Appendix \ref{sec:ood-details} discusses OOD methods, and further experimental details are in Appendix \ref{sec:exp-details}.

Tables \ref{tab:cifar10results}, \ref{tab:cifar100results} summarize the CIFAR-10 and CIFAR-100 results. D-VBLL and G-VBLL report strong accuracy performance and competitive metrics for both ECE and NLL. D-VBLL in particular demonstrates strong accuracy results, as well as competitive (with SNGP) NLL and OOD detection ability. Despite its comparative simplicity, it outperforms SNGP on accuracy and OOD on CIFAR-10 and accuracy on CIFAR-100. It matches SNGP on OOD for CIFAR-100, and is competitive (although slightly worse) on ECE and NLL. Overall, D-VBLL models stand out for their strong performance relative to their complexity. They also perform well as post-training models, whereas G-VBLL performs is substantially degraded. 

While models with MAP feature estimation show strong performance versus baseline models, the performance of variational feature learning models (BBB) is more mixed. In regression tasks, these models are competitive, while in classification the performance is worse than deterministic models. In both settings, we use default KL term weighting (one over the dataset size). This contrasts with the tempered/cold posterior effect \citep{kapoor2022uncertainty, wenzel2020good, izmailov2021bayesian, aitchison2020statistical}, in which it has been observed that alternative weightings of the likelihood and the KL may outperform this one. This is attributable (in part) to two factors: data augmentation and stochastic regularization. In regression there is no data augmentation and the model is trained for substantially longer than deterministic models; in classification we use standard augmentation and our training is more limited. Thus, it is possible that classification BBB models are over-regularized. We investigate this question in more detail in the appendix.

\begin{figure}
    \centering
    \includegraphics[width=.9\linewidth]{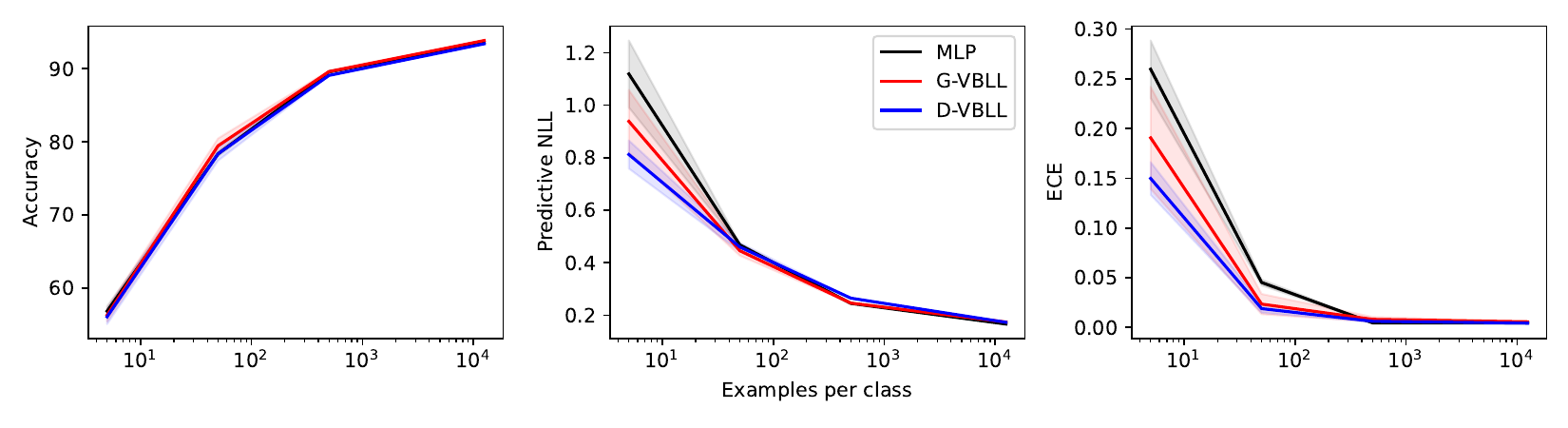}
    \caption{A performance comparison of G-VBLL, D-VBLL, and baseline MLP models on the IMDB Sentiment Classification Dataset. The models utilize text embeddings extracted from a pre-trained OPT-175B model. Results are presented across multiple training dataset scales, and the shaded regions represent $1\sigma$ error bounds.}
    \label{fig:llm_metrics}
\end{figure}

\subsection{Sentiment Classification with LLM Features}

We evaluate VBLL models for language modelling tasks using the IMDB Sentiment Classification Dataset \citep{imdbdataset}. The IMDB dataset is a binary text classification task consisting of 25,000 polarized movie reviews for training and another 25,000 for testing. A pre-trained OPT-175B \citep{zhang2022opt} model is used for text feature extraction. Sequence embeddings are obtained from OPT as the last token output from the the final network layer. We train both the generative (G-VBLL) and discriminative (D-VBLL) models and a baseline MLP on the sequence embeddings via supervised learning at multiple training dataset scales: 10, 100, 1000 and 25,000 training samples. Evaluation is performed using the complete test set at each training dataset scale.
Results are shown in Figure \ref{fig:llm_metrics}. The VBLL models demonstrate strong performance in comparison to the MLP baseline. We see significantly lower predictive NLL and ECE at smaller training dataset sizes. These findings validate the VBLL models' potential for integration with large-scale modern language models for diverse applications, particularly in sentiment classification tasks.

\begin{table}[t]
\caption{Wheel bandit cumulative regret. }

\centering
\scriptsize
\begin{tabular}{c|ccccc} 

 & $\delta = 0.5$ & $\delta = 0.7$ & $\delta = 0.9$ & $\delta = 0.95$ & $\delta = 0.99$\\ 

\hline

VBLL & $\bm{0.46 \pm 0.01}$ & $\bm{0.89 \pm 0.01}$ & $\bm{2.54 \pm 0.02}$ & $\bm{4.82 \pm 0.03}$ & $\bm{24.44 \pm 0.71}$ \\
NeuralLinear & $1.10 \pm 0.02$ & $1.77 \pm 0.03$ & $4.32 \pm 0.11$ & $11.42 \pm 0.97$ & $52.64 \pm 2.04$ \\
NeuralLinear-MR & $0.95 \pm 0.02$ & $1.60 \pm 0.03$ & $4.65 \pm 0.18$ & $9.56 \pm 0.36$ & $49.63 \pm 2.41$ \\
LinDiagPost & $1.12 \pm 0.03$ & $1.80 \pm 0.08$ & $5.06 \pm 0.14$ & $8.99 \pm 0.33$ & $37.77 \pm 2.18$ \\
\hline

\end{tabular}

\label{tab:wheel_bandit}
\end{table}
\begin{table}[t!]
\caption{Wheel bandit simple regret. }

\centering
\scriptsize
\begin{tabular}{c|ccccc} 

 & $\delta = 0.5$ & $\delta = 0.7$ & $\delta = 0.9$ & $\delta = 0.95$ & $\delta = 0.99$\\ 

\hline

VBLL & $\bm{0.27 \pm 0.03}$ & $\bm{0.69 \pm 0.06}$ & $\bm{2.28 \pm 0.14}$ & $\bm{4.16 \pm 0.17}$ & $\bm{21.05 \pm 1.59}$ \\
NeuralLinear & $0.31 \pm 0.03$ & $\bm{0.68 \pm 0.07}$ & $\bm{2.18 \pm 0.13}$ & ${5.44 \pm 0.73}$ & $46.42 \pm 3.45$\\
NeuralLinear-MR & $0.33 \pm 0.04$ & ${0.79 \pm 0.07}$ & $\bm{2.17 \pm 0.14}$ & $\bm{4.08 \pm 0.20}$ & $35.89 \pm 2.98$ \\
LinPost-MR & $0.70 \pm 0.06$ & ${0.99 \pm 0.10}$ & ${3.08 \pm 0.22}$ & ${4.85 \pm 0.27}$ & $25.42 \pm 1.81$ \\
\hline

\end{tabular}

\label{tab:wheel_bandit_simple}
\end{table}

\subsection{Wheel Bandit}

To investigate the value of VBLL models in an active learning setting, we apply a VBLL regression model to the wheel bandit problem presented in \citet{riquelme2018deep}. This problem is a contextual bandit in which the state is sampled randomly in a two dimensional ball, and the learned model aims to identify the reward function. There are five regions in the ball and five actions: each region roughly corresponds to a correct action yielding a high reward, and incorrect action choice yields a low reward, although action 1 always yields an intermediate reward and no high-reward action exists for region 1. The parameter $\delta$ controls the volume of the high-reward regions, with larger $\delta$ corresponding to smaller high-reward regions. We report both cumulative regret---the difference in reward compared to an oracle, normalized to the performance of a random agent, aggregated over the full problem duration---and the simple regret, which captures only the last 500 timesteps and thus (roughly) measures the final quality of the learned model. We use a Thompson sampling policy \citep{russo2018tutorial, thompson1933likelihood}, and compare to the top models reported in \citep{riquelme2018deep}. We find that our VBLL model strongly outperforms the top performing baselines in cumulative regret (Table \ref{tab:wheel_bandit}) and slightly outperforms them in simple regret (Table \ref{tab:wheel_bandit_simple}), implying both the capacity of the model matches the best baselines while also exploring more effectively. 

\section{Conclusions and Future Work}

We have presented a simple, nearly computationally free Bayesian last layer architecture that can be applied to arbitrary network backbones. The practical realization of the VBLL model is a small number of extra parameters (corresponding to the variational posterior covariance) and a small number of regularization terms corresponding to terms arising in the marginalized predictive likelihood, prior terms used in MAP estimation, and KL divergences. 
Several important directions for future work exist. First, few-show adaptation that further exploits conjugacy of these models via e.g.~recursive Bayesian least squares is possible. We have only leveraged basic ideas from variational inference in this work; there are many highly practical ideas within variational Kalman filtering which may enable efficient model adaptation, label noise robustness, inference within heavy-tailed noise, or improved time series filtering \citep{sykacek2002adaptive, sarkka2009recursive, ting2007kalman}.

\subsubsection*{Acknowledgments}
We acknowledge Apoorva Sharma, Jascha Sohl-Dickstein, Alex Alemi, and Allan Zhou for useful conversations over the course of this work. We also gratefully acknowledge Paul Brunzema, who identified a subtle bug in our initial results. 

\bibliography{cites}
\bibliographystyle{iclr2024_conference}

\appendix

\newpage
\appendix

\startcontents
\printcontents{}{1}{}
\newpage

\section{The Multivariate Regression Model}
\label{sec:bll_app1}

\label{sec:multivariate}

In the multivariate regression case, we consider a model of the form 
\begin{equation}
    \y = \Param \feat + \ep
\end{equation}
and place a matrix normal \citep{tiao1964bayesian, geisser1965bayesian} prior on $\Param$, with $\Param \sim \MN(\priorparam, I, \priorcov)$. For a discussion of the matrix normal distribution, we refer the reader to \citep{box2011bayesian}. 

Given the matrix normal prior and the above model, the posterior is also matrix normal. We thus fix a matrix normal variational posterior. 
In Appendix \ref{sec:thm1}, we obtain an ELBO of the form 
\begin{equation}
    \loss(\weights, \stats, \Sep) = \frac{1}{T} \sum_{t=1}^T \left( \log \N(\y_{t} \mid \bar{\Param} \feat_{t}, \Sep) - \frac{1}{2} \feat_t^\top \cov \feat_t \tr(\Sep^{-1})\right).
\end{equation}
for $\stats = \{\bar{\Param}, \cov\}$, and we use this as a training objective.

For a parameter distribution $\MN(\bar{\Param}, I, \cov)$, prediction in this model is analytically tractable and is 
\begin{equation}
    p(\y_t \mid \x_t, \stats, \weights) = \N(\mParam \feat, \feat_t^\top S \feat_t I + \Sep).
\end{equation}

\section{Algorithmic Details}
\label{sec:details}

In this section we present concrete details on training VBLL models. We first describe the procedure for MAP estimation, last layer training on frozen features, and variational learning of features, as described in the paper body. We then discuss prior choice, describe the resultant regularization terms, and describe prediction and out of distribution detection within these models.

\subsection{Feature Point Estimation}

We propose to train our models via joint variational inference for the last layer and MAP estimation of network weights (and noise covariance), yielding optimization problem
\begin{equation}
    \weights^*, \stats^*, \Sep^* = \argmax_{\weights, \stats, \Sep} \left\{ {\loss}(\weights, \stats, \Sep) + T^{-1} ( \log p(\weights)  + \log p(\Sep) - \KL(q(\p\mid \stats) || p(\p)))\right\}.
\end{equation}
We will write the three terms on the RHS (scaled by $1/T)$ as $\reg(\weights, \stats, \Sep)$. Reasonable priors for neural network weights have been discussed in several papers \citep{blundell2015weight, pearce2020expressive, fortuin2022priors, farquhar2020radial, watson2020neural, dusenberry2020efficient, nalisnick2018priors}. 
In this work, we use simple isotropic Gaussian priors which yields a weight decay regularizer. 
While variational inference for the noise covariance is possible, we choose (MAP) point estimation to simplify the model. We use a standard inverse-Wishart prior; ignoring terms that vanish in gradient computation, we have likelihood 
\begin{equation}
\label{eq:wishart}
    \log p(\Sep) = \frac{\nu + N + 1}{2} \logdet \Sep^{-1} - \frac{1}{2} \tr(M \Sep^{-1})
\end{equation}
where $\Sep$ is $N \times N$, $\nu > N - 1$ are the degrees of freedom and $M$ is the scale matrix. The terms $\nu, M$ are hyperparameters that are fixed. 

\subsection{Post-Training with VBLL Layers}

In addition to jointly training the features and the last layer, we can train them independently. This is potentially desirable in several situations. For example, a model may already be trained, and it is useful to augment the model with uncertainty post-hoc. We propose to first train a model using a standard network architecture and a standard loss function, yielding $\weights^*$ and $\mparam^*$ (or $\mParam^*$ in the multivariate case). Given these quantities, the last layer is trained via 
\begin{equation}
    \stats^*, \Sep^* = \argmax_{\stats, \Sep} \left\{ {\loss}(\weights^*, \stats, \Sep) + T^{-1} (\log p(\Sep) - \KL(q(\p\mid \stats) || p(\p)))\right\}.
\end{equation}
Practically, one can initialize the mean of the variational last layer (in the regression of discriminative classification case) with the last layer point estimate $\param^*$ from the first phase of training. 

\subsection{Collapsed Variational Inference for Bayesian Neural Networks}

Stochastic variational approximations to the posterior over the network weights have previously been used for Bayesian learning \citep{blundell2015weight}. In this section, we discussion computation of variational posterior $q(\weights)$, following the SVI methodology as discussed previously. Whereas our approaches developed in the previous section were deterministic, SVI for all network weights is not possible via deterministic marginalization. Thus, computing 
\begin{equation}
    \E_{q(\weights)}[\log p(\y \mid \x, \weights)]
\end{equation}
is typically approximated using Monte Carlo methods. In \citet{blundell2015weight}, the authors turn to the reparameterization gradient estimator \citep{kingma2013auto, mohamed2020monte} to enable the computation of the (Monte Carlo estimator of the) gradient with respect to the parameters of the variational posterior. We could take a similar strategy for both $\p$ and $\weights$, turning to sampling-based approximation. However, this sampling scheme yields both noisy gradient estimates and is expensive, as each sample corresponds to a full network evaluation. Our approach will instead marginalize the last layer and sample (some of) the other layers. This corresponds to Rao-Blackwellization \citep{rao1992information, blackwell1947conditional} of the variational lower bound estimator, yielding lower variance gradient estimates. 

We will choose a posterior that factorizes over the (last layer) parameters and weights, $q(\p,\weights\mid \stats) = q(\p\mid \stats_{\p}) q(\weights\mid \stats_{\weights})$. We also, in the discussion below, suppress dependence on $\Sep$; in practice, we will turn to point estimation for this term. Note that further mean field factorizations for $q(\weights\mid \stats)$ are typically employed. For example, \citet{blundell2015weight} factorize the posterior over all weights in the neural network. Given this posterior approximation, we have
\begin{align}
    \log p(Y \mid X) \geq \E_{q(\weights\mid \stats_{\weights})q(\p\mid \stats_{\p})}[ \log p(Y \mid X, \p, \weights)] - \KL(q(\p \mid \stats_{\p}) || p(\p)) - \KL(q(\weights\mid \stats_{\weights}) || p(\weights))
\end{align}
under the assumption that the prior $p(\p, \weights) = p(\p)p(\weights)$ and thus 
\begin{align}
    \frac{1}{T}\log p(Y \mid X) \geq  \E_{q(\weights\mid \stats_{\weights})}[\loss(\weights, \stats_{\p})] - \frac{1}{T}\KL(q(\p\mid \stats_{\p}) || p(\p)) - \frac{1}{T}\KL(q(\weights\mid \stats_{\weights}) || p(\weights))
\end{align}
for the lower bounds $\loss$ developed in Section \ref{sec:svi}. Thus, algorithmically, we first compute the inner expectation and then approximate the outer expectation with a sampling-based estimator. 

\subsection{Training}

\begin{algorithm}[t]
\caption{ \label{alg:bll_reg} Variational BLL Training: Regression}
\centering
    \begin{algorithmic}[1]
    \Require Training data $\data = \{X, Y\}$, variational posterior initialization $\stats = (\mparam, \cov)$, number of train epochs $N$, minibatch size $B$, optimization algorithm $\text{opt}(\cdot)$. 
    \For{$i=1$ to $N$}
    \State Split dataset $\data$ in to minibatches $\data_j = (X_j, Y_j), \,\, j=1,\ldots,\lfloor T/B \rfloor$.
    \For{$j=1$ to $ \lfloor T/B \rfloor$}
    \State $\hat{\loss}(\weights, \stats, \Sep) \gets \frac{1}{B}\sum_{(\x,\y) \in (X_j, Y_j)} (-\log p(\y \mid \x, \mparam) + \frac{1}{2} \tr(\Sep^{-1}) \feat(\x)^\top \cov \feat(\x)) $
    \State $\reg(\weights, \stats, \Sep) \gets \frac{1}{T} (\KL(q(\p\mid \stats) || p(\p\mid \hp)) - \log p(\weights) - \log p(\Sep))$ 
    \State $\weights \gets \weights - \text{opt}(\nabla_{\weights} \hat{\loss}(\weights, \stats, \Sep) + \nabla_{\weights} \reg(\weights, \stats, \Sep) )$ 
    \State $\stats \gets \stats - \text{opt}(\nabla_{\stats} \hat{\loss}(\weights, \stats, \Sep) + \nabla_{\stats} \reg(\weights, \stats, \Sep))$ 
    \State $\Sep \gets \Sep - \text{opt}(\nabla_{\Sep} \hat{\loss}(\weights, \stats, \Sep) + \nabla_{\Sep} \reg(\weights, \stats, \Sep))$ 
    \EndFor
    \EndFor
    \end{algorithmic}
\end{algorithm}

We now present our full training approach for the regression and classification settings.  A detailed procedure for training the regression model with point features is shown in Algorithm \ref{alg:bll_reg}.
Generally, we will minimize the lower bounds we developed for each model. We note that $\loss$ is a sum over data; following \citet{blundell2015weight}, we compute an (unbiased) estimator $\hat{\loss}$ for this term with mini-batches.

The factorization of the ELBO over the data implies a mini-batch estimator for the gradient. Note that 
\begin{equation}
    \frac{1}{T} \sum_{t=1}^T \E_{q(\p\mid \stats)}[ \log p(\y_t \mid \x_t, \p, \weights)] = \E_t \E_{q(\p\mid \stats)}[ \log p(\y_t \mid \x_t, \p, \weights)]
\end{equation}
where the outer expectation on the RHS is with respect to a uniform distribution over $t = 1, \ldots, T$. Note that this also holds for lower bound on the data likelihood, in the case of classification. We can construct a randomized estimator for this expectation based on sub-sampling the data, in our case in mini-batches. For a mini-batch of $B$ datapoints, this yields an estimator for the ELBO of the form
\begin{equation}
    \hat{\loss}(\weights, \stats, \Sep) = \frac{1}{B} \sum_{t=1}^B \E_{q(\p\mid \stats)}[ \log p(\y_t \mid \x_t, \p, \weights, \Sep)].
\end{equation}
In the classification case, this may be an inequality. 
Note that in the limit of infinite training data ($T \to \infty$) the weight on the KL term goes to zero.

We have trained VBLL models with both momentum SGD and AdamW \citep{loshchilov2017decoupled}. While both work effectively, they result in different uncertainty representations far from the data. The interaction of VBLLs with the stochastic regularization associated with different optimizers is an important direction of future work. Practically, gradient clipping was necessary to stabilize late training, especially in the regression case. As the noise variance concentrates, gradient magnitude is highly sensitive to small perturbations to features, which can be rapidly destabilizing; gradient clipping was necessary and sufficient to prevent this destabilization. Beyond these details, training VBLL models did not differ from training normal models.

\subsection{Prediction and Monitoring}

For prediction with VBLL models, we predict directly using the variational posterior, exploiting the conjugate prediction results described in Section \ref{sec:bll}. 
For all three VBLL models, training objective computation and prediction can be reduced from cubic to quadratic complexity (in the last layer input width) by careful parameterization and computation. 
The assumptions required to achieve quadratic complexity for the first two models are minor. However, for the generative classification model, diagonal covariances must be assumed. We discuss complexity in the next section. 

Training yields learned network weights $\weights^*$ (or a variational posterior over these weights), noise covariance $\Sep^*$, and last layer variational posterior parameters $\stats^*$. 
To make predictions, there are two options. 
In the case of the regression and generative classification model, we may discard the variational posterior and leverage exact conjugacy. 
Under (Gaussian) distributional assumptions, exact posteriors may be computed with fixed features. 
However, exact last layer posteriors may be badly calibrated due to violation of distributional assumptions.
Instead, we may make predictions under the variational posterior directly, under the assumption that $q(\p\mid \stats^*) \approx p(\p \mid X, Y)$, yielding
\begin{equation}
    p(\y \mid \x, X, Y) \approx \E_{q(\p\mid \stats^*)}[p(\y\mid\x, \p, \weights^*, \Sep^*)]
\end{equation}
where $(\x,\y)$ denote a test point. For the discriminative classification model, only prediction under the variational posterior is possible, and in this model, sampling or an approximation (e.g. Laplace) may be used. 

The generative classification case provides predicted class probability biases (the predicted probability of seeing a particular class before observing a label) through the Dirichlet posterior. In cases where a system designer believes there is likely to exist distributional shift between the training data and the evaluation conditions, predictions may be directly controlled by modifying this Dirichlet posterior. 

For the variational feature approach, prediction can be done by sampling features and computing mixture distributions, yielding
\begin{align}
    p(\y \mid \x, X, Y) &\approx \E_{q(\weights\mid \stats^*)} \E_{q(\p\mid \stats^*)}[p(\y\mid\x, \p, \weights, \Sep^*)]\\
    &\approx \frac{1}{K} \sum_{k=1}^K \E_{q(\p\mid \stats^*)}[p(\y\mid\x, \p, \weights_k, \Sep^*)]
\end{align}
for $\weights_k$ sampled \iid from the variational posterior. In the regression case, this averaging is straightforward. For the classification cases, we can average pre-softmax or post-softmax. For example, in the case of generative classification, both
\begin{equation}
    p(\y \mid \x, X, Y) \approx \frac{1}{K} \sum_{k=1}^K \sm_{\y}(\log p(\y\mid X, Y) + \log \E_{q(\p\mid \stats^*)}[p(\x \mid \y, \p, \weights_k)])
\end{equation}
and 
\begin{equation}
    p(\y \mid \x, X, Y) \approx \sm_{\y}(\log p(\y\mid X, Y) + \log \frac{1}{K} \sum_{k=1}^K \E_{q(\p\mid \stats^*)}[p(\x \mid \y, \p, \weights_k)])
\end{equation}
are valid Monte Carlo estimators for the predictive density, and the same holds for the discriminative classifier. In practice, we typically use the former (in which we directly average the post-softmax samples) due to the relative implementation simplicity, although the latter is necessary for some forms of out of distribution detection. Note that in the latter estimator, 
\begin{equation}
    \log \frac{1}{K}\sum_k \x_k = \LSE_k(\log \x_k) - \log K
\end{equation}
for generic $\x_k$ and $\log K$ vanishes in the softmax and my therefore be ignored, and where the use of log-sum-exp improves numerical stability. 

\subsection{Out of Distribution Detection} \label{sec:ood-details}

A desirable feature of robust deep learning models is the ability to distinguish between in distribution and out of distribution (OOD) data. We use several metrics for OOD detection with VBLL models. For the discriminative VBLL, we follow  \citet{liu2022simple} and use the maximum softmax probability \citep{hendrycks2016baseline} for an OOD measure. This is computed by sampling from the distribution over logits and passing these samples through the softmax, where they are averaged. 

For the generative classification model, we can use the feature density
\begin{equation}
    p(\x \mid X, Y) \approx \sum_{\y} \E_{\weights, \p}[p(\x, \y \mid  \p, \weights)]
\end{equation}
as an OOD measure. In the above, the expectation are with respect to the variational posteriors; for the MAP estimation case, this corresponds to direct evaluation.

In practice, we found post-training noise covariance calibration improved OOD detection performance for the G-VBLL model. More precisely, we aim to replace a shared diagonal $\Sep$ across all classes with a $\Sep_{\y}$ for each class. Our intuition is that while the $\Sep$ that is used in training is \textit{prescriptive}---in the sense that it provides a model within which learning occurs---the estimated per-class $\Sep_{\y}$ are \textit{descriptive} of the accuracy of modelling during training. Indeed, the training objective for the G-VBLL model is label (marginal) predictive likelihood, and so the training signal to model class feature densities highly accurately is weak. 

Our calibration procedure is as follows. First, we assume a (MAP) point estimate for feature means $\mfeat_{\y}$. For sufficiently large datasets $\cov_{\y}$ rapidly concentrates, so the impact of this assumption is relatively minor. For each class, we then compute the MAP noise covaraince $\Sep_{\y}$ under the inverse-Wishart prior. Concretely, the mean under Gaussian prior $\N(\priorfeat, \priorSep)$ and known noise covariance $\Sep$ is
\begin{align}
    \bm{\mu}_{\y} &= (\priorSep^{-1}_{\y} + T_y \Sep^{-1}) (\Sep^{-1} \sum \feat_t + \priorSep^{-1}_{\y} \priorfeat_{\y})\\
    &= (\frac{1}{T_y} \priorSep^{-1}_{\y} \Sep + I) (\frac{1}{T_y}\sum \feat_t + \Sep \priorSep^{-1}_{\y} \priorfeat_{\y})
\end{align}
where $T_y$ is the number of class occurrences for class $\y$ and where the sum is over all inputs in class $\y$.
For sufficiently large $T$ and zero mean prior, this mean is approximately equal to the empirical average $\frac{1}{T}\sum \x_t$. Thus, taking $\hat{\bm{\mu}} = T^{-1}\sum \x_t$, the noise covariance can be estimated as
\begin{equation}
    \hat{\Sep}_{\y} = \frac{1}{T_y + \nu + N + 1}(M + \sum (\feat_t - \hat{\bm{\mu}}_{\y}) (\feat_t - \hat{\bm{\mu}}_{\y})^\top)
\end{equation}
which corresponds to the MAP posterior with a known mean, and where the sum is again over all inputs in class $\y$. 

We note that while our strategy of sequentially estimating two MAP estimates is relatively unsophisticated, it is straightforward and yields good results, and is consistent for large datasets (under straightforward distributional assumptions). In the above, $N$ corresponds to the dimension of the covariance matrix (as in \eqref{eq:wishart}) and $\nu$ and $M$ corresponds to the prior degrees of freedom and scale matrix, respectively. 
We found that this MAP covariances estimation outperformed the max likelihood covariance estimation as performed in \citet{liu2022simple}.
Moreover, we note that both the empirical mean of the features for each class and the covariance can be recursively estimated in one pass over the data, and so the complexity of this step is $\bigO(T)$. 
Inspired by \citet{ren2019likelihood, ren2021simple}, we subtract the log density under the feature prior as a normalization strategy, which also slightly improves performance. 

While this post-training last layer posterior improves OOD performance, it is substantially over-concentrated for label prediction, yielding to dramatically over-confident predictions. It is an open question how to best estimate the last layer posterior to achieve both effective and calibrated label and OOD prediction.

\section{Parameterization, Complexity, Regularization, and Hyperparameters}
\label{sec:param}

In this section, we discuss how to parameterize each of the terms appearing in each type of VBLL. In each model, we use a ``mixed'' parameterization---in contrast to the standard parameterization or natural parameterization. More precisely, we will parameterize the inverse noise covariance $\Sep^{-1}$ and the covariance of the variational posterior $S$ via Cholesky factorizations, and directly parameterize means $\bar{\Param}, \bm{\mu}$. In our (limited) comparisons of the performance of different parameterizations, we found that our mixed parameterization performed equivalently (if slightly better) to the standard parameterization, and both performed better than natural parameterization. Interestingly, this stands in contrast to standard practice in variational Gaussian process learning \citep{hensman2013gaussian}, in which authors frequently aim to derive natural gradient optimization algorithms. 

We will show that for each VBLL model, under a set of reasonable assumptions, complexity is at worst quadratic in the last layer width and linear in the output dimension. These complexity results enable use of VBLL models on problems with high input dimensionality and high output dimensionality. Moreover, our mini-batch gradient estimation training objective results in (standard) linear complexity of gradient estimation in batch size, enabling training on much larger datasets than is possible with standard marginal likelihood objectives. 

\subsection{Regression Complexity}

Our analysis will focus on the multivariate case, for which the univariate outputs are a special case. We directly parameterize the mean $\bar{\Param} \in \R^{\ydim \times \phidim}$. The covariances are parameterized via Cholesky decomposition to guarantee positive semi-definiteness; in particular we parameterize 
\begin{align}
    \Sep^{-1} &= LL^\top, &&L = L_d + \code{diag}(\exp(\bm{l}))\\
    \cov &= PP^\top, &&P = P_d + \code{diag}(\exp(\bm{p})).
\end{align}
Where $P, L$ are lower triangular with positive diagonals, and thus $L_d, P_d$ are lower triangular with zero diagonal, and vector $\bm{l}, \bm{p}$ control diagonal elements. 

Given these parameterizations, we show the complexity of each operation required for training is at most quadratic in $\phidim$. The training objective has two terms: the log Gaussian density and the trace term. For the log density, we have 
\begin{equation}
    \bm{e}^\top \Sep^{-1} \bm{e} = \bm{e}^\top L L^{\top} \bm{e}
\end{equation}
for $\bm{e} = \y - \bar{\Param}\feat_t$. The term $L^{\top} \bm{e}$ can be computed in $\bigO(\ydim^2)$ time.
The second term is $\feat_t^\top \cov \feat_t \tr(\Sep^{-1})$, for which $\feat_t^\top \cov \feat_t$ can be computed in $\bigO(\phidim^2)$ time, and the trace term 
\begin{equation}
\tr(\Sep^{-1}) = \tr(L L^{\top}) = \|L \|_F^2
\end{equation}
which can be computed in $\bigO(\ydim^2)$ time via squaring and summing the elements of $L$.

The remaining terms are the KL penalty on the variational posterior, and the inverse-Wishart prior on the noise covariance. Fixing a prior $\MN(\priorparam, I, \priorcov)$, the KL penalty for the multivariate regression case is (ignoring constants)
\begin{equation}
    KL(q(\p\mid\stats) || p(\p)) = \frac{1}{2}(\tr((\bar{\Param} - \priorParam)^\top (\bar{\Param} - \priorParam) \priorcov^{-1}) + \ydim \tr(\priorcov^{-1} \cov) + \ydim \log \frac{\det \priorcov}{\det \cov})
\end{equation}
We will fix an isotropic prior, $\priorcov = s I$ for $s > 0$. Thus, the first term is 
\begin{equation}
    \tr((\bar{\Param} - \priorParam)^\top (\bar{\Param} - \priorParam) \priorcov^{-1}) = \frac{1}{s}\|\bar{\Param} - \priorParam\|^2_F \label{eq:frob_prior}
\end{equation}
with complexity $\bigO(\ydim \phidim)$, and the second term is 
\begin{equation}
    \ydim \tr(\priorcov^{-1} \cov) = \frac{\ydim}{s}  \tr(\cov)
\end{equation}
where the trace can again be computed as the squared Frobenius norm of the Cholesky factor of $P$, for complexity $\bigO(\phidim^2)$. The last term is
\begin{align}
    \ydim \log \frac{\det \priorcov}{\det \cov} &= \ydim \phidim \log s - \ydim \logdet \cov
\end{align}
where $\logdet \cov = 2\logdet(P)$ which is equal to the sum of the log diagonal elements, which can be computed in $\bigO(\phidim)$.

Finally, we have the inverse-Wishart noise covariance prior, which has terms $\logdet \Sep^{-1}$ and $\tr(M \Sep^{-1})$ for scale matrix $M$. The log determinant term may be computed as previously, with complexity $\bigO(\ydim)$. Choosing scale matrix $M = m I$, we have $\tr(M \Sep^{-1}) = m \tr(\Sep^{-1})$ which again is $\bigO(\ydim^2)$. Summing all of this up, we have the total complexity of VBLL computations as $\bigO(\ydim^2 + \phidim^2)$, which is equivalent to the complexity of standard matrix multiplication; thus, there is effectively zero added computational expense from the VBLL model compared to a standard network. The reader may easily verify that complexity of prediction is no greater than the training complexity in the regression model. 

\subsection{Classification Complexity}

The complexity for the discriminative classification model follows from the regression model. We use the same parameterization, although we turn to a diagonal noise covariance $\Sep$. The computation of the KL penalty is identical to the regression case. The only difference is that $\feat_t^\top \cov_{\y} \feat_t$ must be computed for all classes $\y$, yielding complexity $\bigO(\phidim^2 \ydim)$. This term dominates the complexity of this model; however, further factorization of the covariance is straightforward and can reduce the practical complexity. To predict in these models, sampling realizations of the last layer must be done to sample logits. This sampling is straightforward to do using the Cholesky factorization of the covariance, and has quadratic complexity. 

For the generative classification model, we are limited by the $\Sep + \cov_{\y}$ term in the log-sum-exp. As far as we are aware, there is no (practical) way to compute this term with quadratic complexity, or otherwise inexpensively compute this log density. Thus, in this paper we restrict $\Sep$ and $\cov$ to diagonal matrices, which results in linear complexity in $\phidim$ for all operations in loss computation. Thus, under this approximate posterior, the complexity of the full training loss computation is $\bigO(\phidim \ydim)$, which is equivalent to standard neural network models. This covariance structure is relatively restrictive, and improvements may results from sparse covariance structures. 

Concretely, we compare the run time of one step of training across a baseline DNN, and both flavors of VBLL. We compare these models on CIFAR-10 training on a NVIDIA T4 GPU, with the wide ResNet encoder used in the rest of the classification experiments. The results are shown in Table \ref{tab:runtimes}. We note that our VBLL implementations are not carefully optimized, and so these slowdowns are an upper bound on the possible slowdown.

\begin{table}[t]
\caption{Time per batch on CIFAR-10 training.}

\centering
\begin{tabular}{c|cc} 

 Model & Run time (s) & \% above DNN \\ 

\hline

DNN & 0.321 & 0\% \\
D-VBLL & 0.338 & 5.2\%\\
G-VBLL & 0.364 & 13.4\%\\
\hline

\end{tabular}

\label{tab:runtimes}
\end{table}

\subsection{Complexity of Comparable Baselines}

There are a set of baseline methods that are similar to VBLLs but often have different complexity. As discussed throughout the paper, training BLL models by exploiting exact conjugacy (or exactly computing the marginal likelihood) requires iterating over the full training set, yielding linear complexity in the size of the dataset. This almost always makes standard marginal likelihood training intractable. More directly comparable is SNGP \citep{liu2022simple}, which also exploits exact conjugacy (or approximation thereof for classification) but only computes the last layer covariance once per epoch. This amortizes the cost of iteration over the full dataset. In practice, they use an exponential moving average estimate of the covariance, which removes the need to load the data multiple times per epoch. However, this covariance must still be computed and inverted, which has cubic complexity in the last layer dimension. Last layer Laplace \citep{daxberger2021laplace} methods, similarly, require a pass over the full dataset and must invert a dense covariance matrix, yielding cubic complexity. However, this is only done as a post-processing step for a trained model. 

\subsection{Hyperparameters}

VBLL models introduce a small number of hyperparameters over standard network training. First, standard hyperparameters may need to be modified for VBLL models. For example, we found longer training runs resulted in slightly improved calibration, but we believe further investigation of learning rate schedules is necessary. For MAP features estimation, we use standard weight decay regularization values. 

The main novel hyperparameters introduced by the VBLL model are those associated with priors. In particular, the last layer mean prior (defined by a mean and variance; in the regression case, these are written $\priorparam, \priorcov$) must be chosen. Practically, it is common to normalize outputs to have isotropic Gaussian distributions for regression, and thus we have found $\priorparam = 0$ and $\priorcov = I$ yield a reasonable if diffuse prior. For the classification case, we found these values similarly induce reasonable epistemic uncertainty over the predictive categorical distribution. 

The other novel hyperparameters are those associated with the noise covariance inverse-Wishart prior, the degrees of freedom $\nu$ and the scale matrix $M$. For all experiments, we fix the scale matrix as a scalar multiple of the identity matrix, $M = m I$. In our regression experiments we fix these to be $(1,1)$, and find good resulting performance, but further investigation is possible. In the classification case---and in particular the generative classification case---these parameters control the degree of concentration in the feature space, and thus must be more carefully selected (and often co-selected with the weight decay strength).

\begin{figure}[t]
    \centering
    \includegraphics[width=0.49\linewidth]{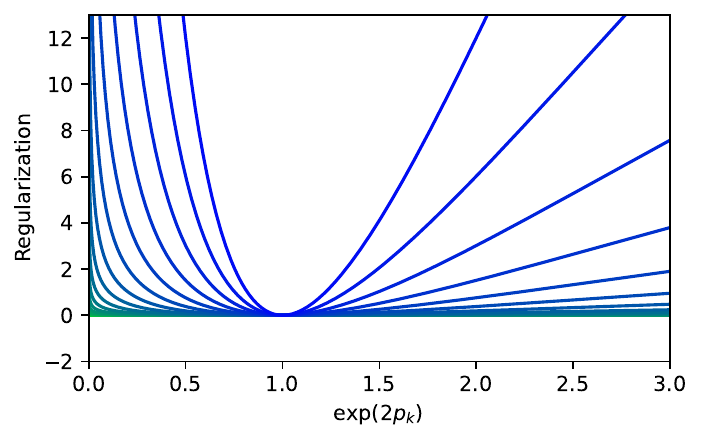}
    \includegraphics[width=0.49\linewidth]{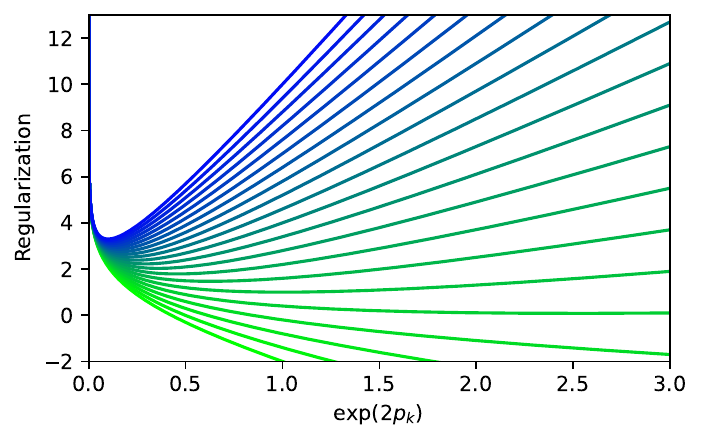}
    \caption{Weight decay (\textbf{left}) and our KL/Inverse-Wishart regularizers (\textbf{right}) plotted versus $\exp(\bm{p}_k)$ (which corresponds to the diagonal element of the covariance matrix). Different curves show varying weight decay strength and varying $a$ term in \eqref{eq:our_reg}, with $b=1$.}
    \label{fig:regularizers}
\end{figure}

\subsection{Understanding Prior Regularizers}

In this subsection we investigate the regularization effects of the prior (and KL) terms, and compare them to standard regularizers such as weight decay.
Note that naive weight decay on these parameterizations would correspond to additional loss terms of the form 
\begin{equation}
    \frac{\lambda}{2} ( \|L_d\|_F^2 + \|\bm{l}\|_2^2 + \|P_d\|_F^2 + \|\bm{p}\|_2^2).
\end{equation}
The loss terms resulting from our chosen priors in the regression case (and dropping terms with zero gradient) are 
\begin{align}
    - \log p(\Sep) &= \frac{1}{2}(m \tr( \Sep^{-1}) - \tilde{\nu} \logdet \Sep^{-1} )\\
    KL(q(\p\mid\stats) || p(\p)) &= \frac{1}{2} \left(\frac{1}{s} \|\mParam\|_F^2 + \frac{\ydim}{s} \tr(\cov) - \ydim \logdet \cov) \right)
\end{align}
for $\tilde{\nu} = \nu + N + 1$; note that (other than the weight decay-like term on $\mParam$) both covariance regularizers are of the form 
\begin{equation}
    a \tr(M) - b \logdet(M).
\end{equation}
for constants $a,b$ and matrix $M$. Given our Cholesky parameterization, 
\begin{align}
    \tr(\Sep^{-1}) &= \|P_d\|_F^2 + \sum_k \exp(2 \bm{p}_k)\\
    \logdet \Sep^{-1} &= \sum_k 2 \bm{p}_k
\end{align}
and similarly for $\cov$. Thus, the regularization of the off-diagonal covariance terms again corresponds simply to weight decay, whereas the diagonal elements of both covariance matrices have regularizers of the form
\begin{equation}
    \sum_k (a \exp(2\bm{p}_k) - 2 b \bm{p}_k). \label{eq:our_reg}
\end{equation}
Note that this function is convex. This function (inside the summation) is visualized for varying $a$ (compared to weight decay) in Figure \ref{fig:regularizers}. Our regularization terms provide substantially more control over the minimizing value, and thus more control over predictive variance. However, compared to weight decay, our regularizers vary in scale substantially more which may lead to difficulties trading off regularization terms with other loss terms. 

To counteract this relative lack of interpretability of our hyperparameters, we propose an alternate representation of these values. We rewrite the regularization function as 
\begin{equation}
    a \sum_k (\exp(2\bm{p}_k) - 2 \frac{b}{a} \bm{p}_k). \label{eq:our_reg2}
\end{equation}
where $a$ corresponds to a scale term, and $b/a$ controls the location of the minimum. We may specify a desired predictive variance, which can be mapped to the minimum of the regularization function. Concretely, given some target variance element $\hat{s} = \exp(2 \bm{p}_k)$ (for all $k$), we choose
\begin{equation}
    b = \hat{s}a
\end{equation}
which assures that the minimum of \eqref{eq:our_reg} is achieved when $\bm{p}_k = \frac{1}{2} \log \hat{s}$ for all $k$. Sweeps over the hyperparameters $(a, \hat{s})$ are presented in Figure \ref{fig:hparams}. 

Given this transformation between hyperparameters, we can now be concrete in how to specify these alternate hyperparameters in VBLL models. The original hyperparameters for the model, as described earlier in this section, are the prior last layer covariance scale $s$, the scale matrix for the noise covariance prior $m$, the degrees of freedom $\tilde{\nu}$. Additionally, it is common is Bayesian deep learning to scale down the KL penalty, and we write this factor as $\lambda$. Our alternate hyperparameters are target (diagonal) values, $\hat{\bm{l}} > 0$ and $\hat{\bm{p}} > 0$, and scale parameters $\alpha_\Sigma > 0$ and $\alpha_S > 0$. The mapping between these hyperparameters is:
\begin{align}
    s &\gets \hat{\bm{p}} && m \gets \alpha_\Sigma\\
    \lambda &\gets \frac{\hat{\bm{p}} T \alpha_S}{\ydim} && \tilde{\nu} \gets \hat{\bm{l}} \alpha_\Sigma.
\end{align}
If $\lambda = 1$ as is (perhaps naively) theoretically justified in variational inference, then $\alpha_S$ is correspondingly fixed.

\begin{figure}[t]
    \centering
    \includegraphics[width=0.49\linewidth]{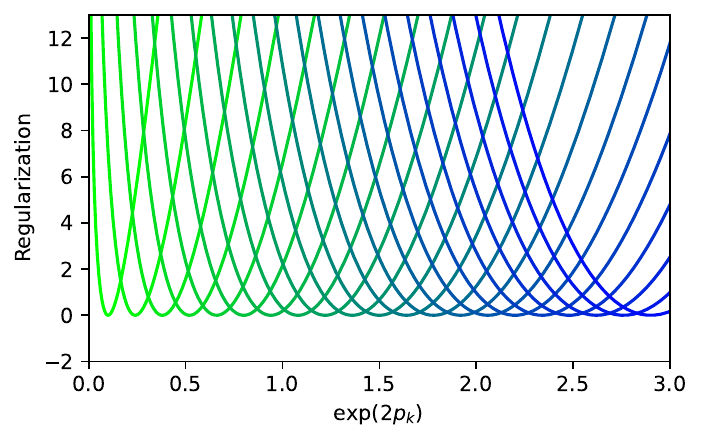}
    \includegraphics[width=0.49\linewidth]{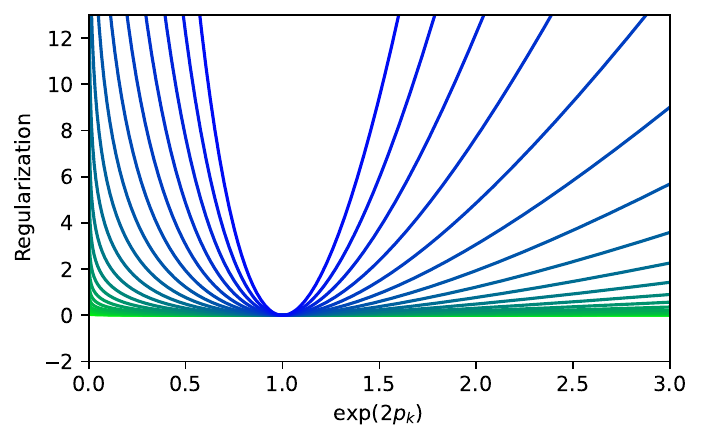}
    \caption{Sweeping over our modified hyperparameter representation. \textbf{Left}: sweeping over desired predictive variance $\hat{s}$, with $a=100$. \textbf{Right}: sweeping over regularization scale $a$ with fixed desired predictive variance $\hat{s} = 1$. Note that all functions asymptote at $\exp(2 \bm{p}_k) = 0$. In these figures, the curves have been vertically shifted to achieve a minimum at zero; this vertical shift does not impact regularization. }
    \label{fig:hparams}
\end{figure}

\section{Experimental Details} \label{sec:exp-details}

This section contains details about the experiments in the body of the paper. We note that for highlighting in the tables in the paper body, if a single-pass method (in the upper half of each table) is the best performing in a metric, that result is highlighted. If the best performing is multi-pass, we highlight both the best multi-pass and single-pass method in the column. We believe that this is important, as many applications required single-pass methods and thus multi-pass results are irrelevant. 

\subsection{Metrics} 

For regression experiments, we report the predictive negative log likelihood (NLL) of test data, which can be computed in closed form for point feature estimates. We also report the root mean squared error (RMSE), a standard metric for regression. For classification, in addition to the negative log likelihood, we also report predictive accuracy (based on standard argmax of the predictive distribution), and expected calibration error (ECE), which measures how the model's subjective predictive uncertainty agrees with predictive error. Finally, we also investigate out of distribution detection performance, a standard evaluation scheme for robust and probabilistic machine learning \citep{liu2022simple}. We compute the area under the ROC curve (AUC) for near-OOD and far-OOD datasets, which is discussed in more detail later in this section. 

\subsection{Baselines} 

We distinguish baselines between single-pass and multi-pass models, which we show in upper and lower segments of each table, respectively. Single-pass methods require only a single network evaluation, and we compare VBLLs with MAP feature estimation to these models. 
Multi-pass methods require several network evaluations, and includes variational methods like Bayes-by-backprop (which we refer to as BBB) \citep{blundell2015weight}, ensembles \citep{lakshminarayanan2017simple}, Bayesian dropout \citep{gal2016dropout} and stochastic weight averaging-Gaussian (SWAG) \citep{maddox2019simple}.

Within regression, we compare to models which exploit exact conjugacy, including Bayesian last layer models (GBLL and LDGBLL \citep{watson2021latent}) and RBF kernel Gaussian processes. We note that these methods require computing full marginal likelihood and are thus difficult to scale to large training sets. We also compare to MAP learning, in which a full network is trained via MAP estimation, and a Bayesian last layer is fit to these fixed features \citep{snoek2015scalable}. 
Within classification, we primarily compare to standard networks (DNN), as these output a distribution over labels and thus can be directly compared to our model. We also compare to SNGP \citep{liu2022simple} and last layer Laplace-based methods \citep{daxberger2021laplace}, which are similar last layer models. SNGP aims to approximate deep kernel GPs \citep{wilson2016deep}, and Laplace methods compute a last layer approximate posterior after training. We note that in contrast to SNGP \citep{liu2022simple}, we do not modify a standard neural network backbone, such as including spectral normalization, adding residual connections, or using sinusoidal nonlinearities. Both SNGP and last layer Laplace methods require a pass over the full dataset to fit the last layer distribution; in contrast, our method maintains a last layer distribution during training, which may be useful for e.g.~active learning. We do not evaluate Laplace methods in regression as they are nearly identical to the MAP model. 

\subsection{Toy Experiments}

Figure \ref{fig:toy} contains simple visualizations for the regression model and the generative VBLL model. In particular, the regression model shows predictions with variational feature learning (with KL weight of $1.0$) on a cubic function with a gap in the data. This dataset consisted of 100 points sampled in $[-4,-2] \cup [2, 4]$, with a noise standard deviation of $0.1$. The model consisted of a two hidden-layer MLP of width 128, trained for 1000 epochs with a batch size of 32, with stochastic gradient descent with momentum, with a learning rate of $3\cdot10^{-4}$, zero weight decay, and momentum beta parameters of $0.9$. These values were arbitrarily chosen, although the choice of SGDM versus Adam \citep{kingma2014adam} does make a difference on prediction far from the data. Gradient clipping with a maximum magnitude of $2.0$ was used. The DOF and scale parameters were both set to $1.0$

For the classification problem, we used the scikit-learn \citep{pedregosa2011scikit}
implementation of the half moon dataset, with 1000 data points and a noise standard deviation of $0.2$. We trained a G-VBLL model with residual-structured MLP of width 128 (each hidden layer is added to the layer input). This model was trained with SGDM with learning rate $3\cdot 10^{-2}$, momentum beta $0.9$, and weight decay $10^{-4}$, for 100 epochs and with a batch size of 32. The DOF parameter was $128$, and the scale parameter was $1.0$.

\subsection{Regression}

Our UCI experiments closely follow \citet{watson2021latent}, and we compare directly to their baselines. For VBLLs, we used a $\N(0,I)$ last layer mean prior and a $\mathcal{W}^{-1}(1,1)$ noise covariance prior. For all experiments, we use the same MLP used in \citet{watson2021latent} consisting of two layers of 50 hidden units each (not counting the last layer). For all datasets we matched \citet{watson2021latent} and used a batch size of 32, other than the \textsc{Power} dataset for which we used a batch size of 256 to accelerate training. For all datasets we normalize inputs (using the training set statistics) and subtract the training set means for the outputs. We did not re-scale the output magnitudes, to retain comparability of NLLs. We note that the extent to which outputs were normalized in \citet{watson2021latent} is unclear. However, they make the parameters of their prior learnable, which can have a similar effect to centering the outputs, and so we believe our output centering is reasonable. All results shown in the body of the paper are for leaky ReLU activations. For all experiments, a fixed learning rate of $0.001$ was used with the AdamW optimizer \citep{loshchilov2017decoupled}. A default weight decay of $0.01$ was used for all experiments. We clipped gradients with a max magnitude of $1.0$.

For all deterministic feature experiments, we ran 20 seeds. For each seed, we split the data in to train/val/test sets (0.72/0.18/0.1 of the data respectively). We train on the train set and monitor performance on the validation set to choose a total number of epochs. In contrast to \citet{watson2021latent} who compute validation performance for every epoch, we compute validation performance (predictive NLL) every 10 epochs (note that the datasets are small and typically train for hundred of epochs). After choosing a number of epochs, we train on the combined training and validation set and evaluate performance on the test set. We use a max number of epochs shown in Table \ref{tab:epochs}, which were large enough to not be reached but often lower than those used in \citet{watson2021latent}.

For our BBB feature models, we ran 10 seeds with a similar procedure to the above. We follow \citet{watson2021latent} and use a $\N(0, 4/\sqrt{n_{\text{in}}})$ for each weight (where $n_{\text{in}}$ denotes the layer input width). Validation performance was monitored every 100 epochs, and 10 weight samples were used to compute the validation predictive likelihood and choose a full training number of epochs. 

\begin{table}[t]
\centering
\small
\begin{tabular}{c|cccccc} 
 Features & \textsc{Boston} & \textsc{Concrete} & \textsc{Energy} & \textsc{Power} & \textsc{Wine} & \textsc{Yacht}\\
 
\hline
MAP &  $3000$ & $3000$ & $2000$ & $3000$ & $1000$ & $2000$\\ 
Variational &  $10000$ & $10000$ & $10000$ & $10000$ & $10000$ & $10000$\\
\hline
\end{tabular}
\caption{Maximum number of epochs for each set of features and each UCI dataset.}\label{tab:epochs}
\end{table}

\subsection{Image Classification}

All classification experiments utilize the Wide ResNet-28-10 (WRN-28-10) backbone network architecture. Hyperparameters are similar to those proposed by \citet{zagoruyko2016wide}. Unlike the original implementation of WRN, we do not employ Nesterov momentum and we fully decay an initial learning rate of 0.1 according to a Cosine Annealing schedule instead of a stepped decay schedule. Gradients are clipped with a maximum magnitude of 2.0 and we impose a last layer KL weight of 1.0. We All classification results are reported across 3 seeds and use the standard WRN data-augmentations proposed by \citep{zagoruyko2016wide}. For the deterministic feature experiments, we train each model for 300 epochs.

The BBB backbone-based models utilize the same WRN architecture and are primarily deterministic. The BBB models implement a single final Bayesian linear layer with a prior distribution of $\mathcal{N}(0, 0.01)$. Each BBB-based model used 10 weight samples for test set evaluation. This operation is relatively cheap when compared to a fully stochastic network because the intermediate features are cached prior to the final Bayesian linear layer weight sampling and computation. All BBB are trained for 400 epochs and we impose a last layer KL weight of 1.0 and a feature KL weight of 0.5 the VBLL-BBB and DBLL-BBB models. The BBB baseline model utilized a feature KL weight of 50. 

\subsection{Sentiment Classification with LLM Features}

We perform sentiment classification experiments utilizing features extracted from a pre-trained OPT-175B \citep{zhang2022opt} model on the IMDB Sentiment Classification dataset \citep{imdbdataset}. We compare our G-VBLL and D-VBLL models with an MLP baseline. The IMDB dataset is a text-based binary classification task in which inputs are polarized movie reviews and outputs are positive and negative labels.  Text embeddings are extracted from the OPT-175B model for each sample as the output of the last model layer for the final token in the text sequence. This results in a sequence embedding, $e=\mathbb{R}^{12288}$, for each sample. In all cases, we utilize two linear layers prior to the classification head.  To understand the impact of training dataset size on performance, all experiments are performed at multiple training dataset scales. The IMDB dataset is sampled iid. to construct training datasets with 10, 100, 1000 samples alongside the standard 25,000 sample training split. We train models at all dataset scales and report across 3 seeds. The AdamW optimizer is used for all models. Hyperparameters such as learning rate, weight decay were tuned across both the 10 sample and full dataset scales. 

\subsection{Wheel Bandit}

We match the experimental settings of \citet{riquelme2018deep}. In particular, we use a batch size of 512, a learning rate of $3e-3$, and train for 80000 steps total. We perform 20 steps in the environment per phase of updating, and perform 100 gradient steps when updating. We use a gradient clipping norm of $1.0$. We use the same network architecture as baselines, an MLP with widths $(100, 100, 5)$, where the last layer is a VBLL. The VBLL hyperparameters were set to defaults: the degrees of freedom and the scale in the Wishart prior are set to $1$, and the prior scale was also set to $1$. 

\begin{figure}
    \centering
    \includegraphics[width=0.49\linewidth]{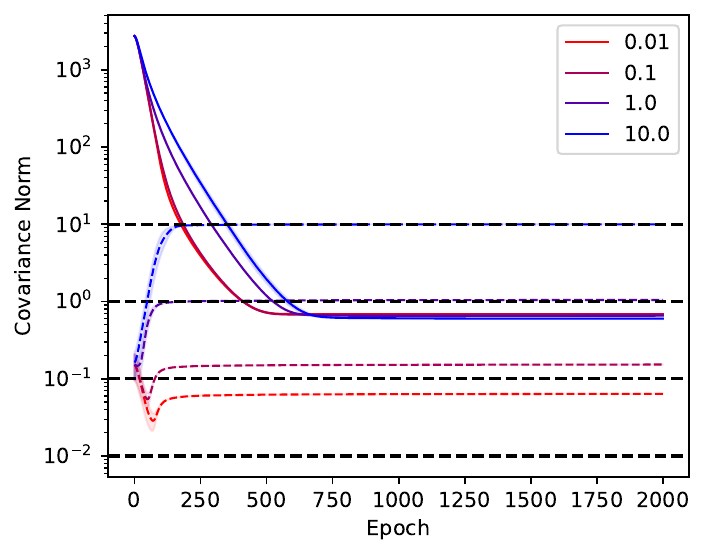}
    \includegraphics[width=0.49\linewidth]{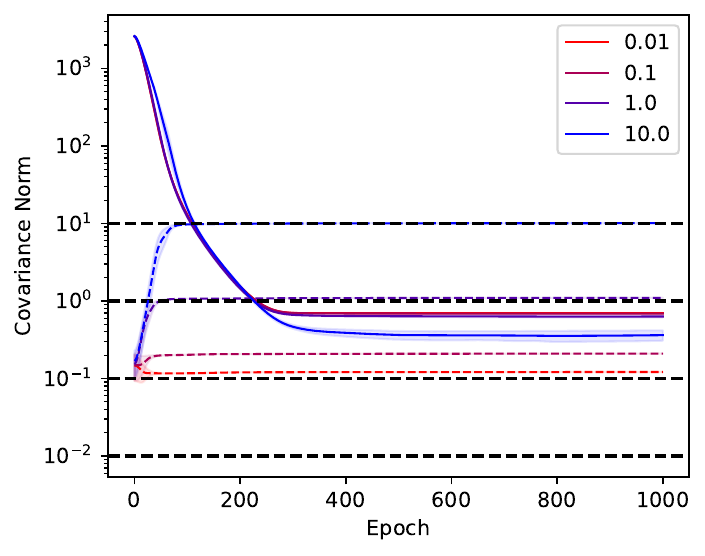}
    \caption{Sweeping over the $\Sep$ location parameter for UCI datasets Energy (\textbf{left}) and Wine (\textbf{right}). The dotted colored lines correspond to $\Sep^{-1}$ values over the course of training, and solid colored lines correspond to the Frobenius norm of $\cov$. The black dotted lines correspond to target $\Sep^{-1}$ values. The scale hyparparameter was large in these experiments to illustrate the ability to effectively control noise covariance. Note that for very small $\Sep^{-1}$, the impcat of the predictive loss limits the degree to which realized noise covariance matches the goal value; this trade-off is controlled by scale parameters. }
    \label{fig:sig_loc}
\end{figure}

\begin{figure}[t!]
    \centering
    \includegraphics[width=0.49\linewidth]{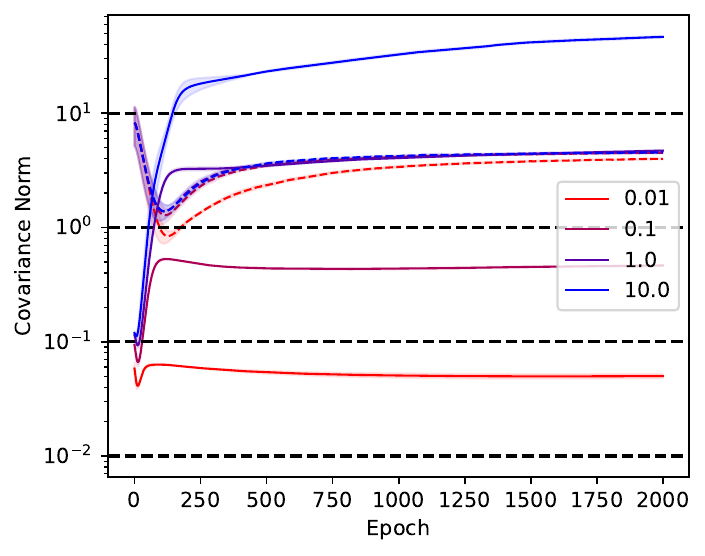}
    \includegraphics[width=0.49\linewidth]{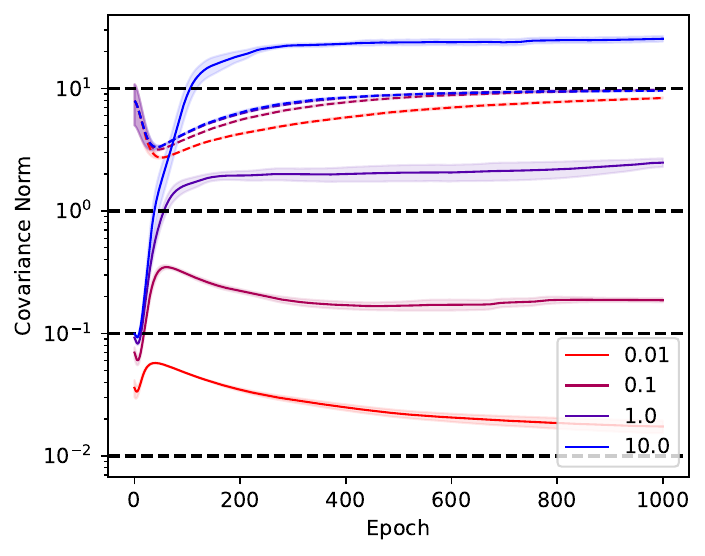}
    \caption{Sweeping over the $\cov$ location parameter for UCI datasets Energy (\textbf{left}) and Wine (\textbf{right}). Again, dotted colored lines correspond to $\Sep^{-1}$ values over the course of training, solid colored lines correspond to the Frobenius norm of $\cov$, and black dotted lines correspond to target diagonal $\cov$ values. Note that the Frobenius norm of $\cov$ in all cases is higher than the target due to the off-diagonal elements, but the realized covariance can be well controlled.}
    \label{fig:s_loc}
\end{figure}

\begin{figure}[t!]
    \centering
    \includegraphics[width=0.49\linewidth]{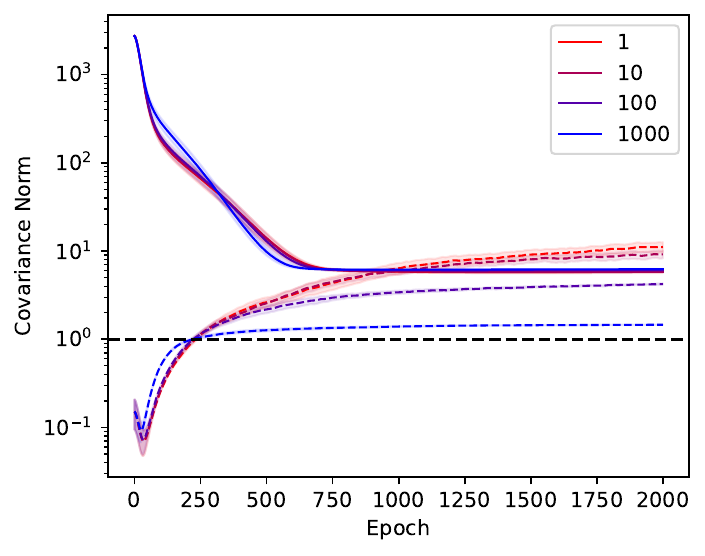}
    \includegraphics[width=0.49\linewidth]{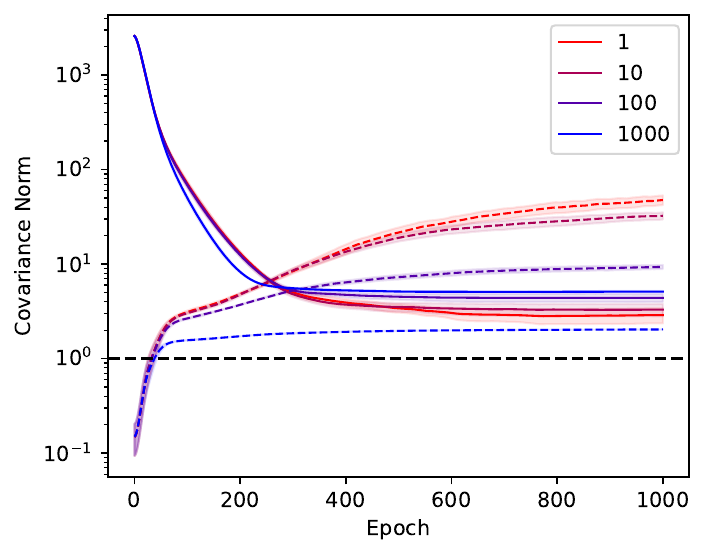}
    \caption{Sweeping over the $\Sep$ scale parameter for UCI datasets Energy (\textbf{left}) and Wine (\textbf{right}). Again, dotted colored lines correspond to $\Sep^{-1}$ values, solid colored lines correspond to the Frobenius norm of $\cov$, and the black dotted line corresponds to a location hyperparameter $\Sep^{-1}$ value of $1$. Note that by varying the scale hyperparameter, the strength of the regularization is varied without changing the target value, which is a result of our hyperparameter reformulation. }
    \label{fig:s_scale}
\end{figure}

\section{Hyperparameter Studies and Ablations}

\begin{table}[t]
\centering
\small
\begin{tabular}{@{}c|c|ccc@{}}
\multicolumn{1}{l|}{} & \multicolumn{1}{l|}{} & \multicolumn{3}{c}{Feature KL Weight} \\ 
LL KL Weight & {MAP} & $50$ & $5$ & $0.5$ \\ 
\hline
1.0 & 0.160 & 0.266 & 0.281 & 0.282 \\
0.1 & 0.162 & 0.266 & 0.286 & 0.272 \\
0.01 & 0.168 & 0.268 & 0.268 & 0.280 \\
0.001 & 0.160 & 0.267 & 0.272 & 0.276 \\ 
\hline
\end{tabular}%
\caption{CIFAR-10 NLL for varying values of KL weights, for both the last layer and the feature weighting in variational feature learning.}
\label{tab:kl-ablation}
\end{table}

\paragraph{KL weight.} We additionally explore the NLL sensitivity of the DBLL and DBLL-BBB models to various KL weighting configurations. In Table \ref{tab:kl-ablation}, we sweep across orders of magnitude for both the last layer and feature KL weighting parameters. 

\paragraph{Location and scale hyperparameters.} We investigate our hyperparameter reformulation on UCI datasets in Figures \ref{fig:sig_loc} -- \ref{fig:s_scale}. In particular, we vary each of the location and scale parameters and show that we can effectively control the quantities of interest. In particular, Figures \ref{fig:sig_loc} and \ref{fig:s_loc} show varying the location hyperparameter for each covariance matrix $\Sep, \cov$ with a high scale hyperparameter, enabling fine-grained control over realized values. In practice, this degree of direct control over realized model values is not desirable, and these plots only illustrate that such a degree of control is possible. In Figure \ref{fig:s_scale}, we vary the scale parameter for $\Sep$ and show that it effectively controls the strength with which $\Sep$ is regularized. With naive hyperparameter selection, interaction between scale and location parameters would require careful planning to control regularization scale independently of location, whereas our reformulation enables direct control of scale. 

\section{Proofs and Further Theoretical Results}
\label{sec:proofs}

\subsection{Helper Results}

Our first result builds on results from the variational Gaussian process literature \citep{titsias2009variational, hensman2013gaussian}. 

\begin{lemma}
\label{lem:explogreg}
Let $q(\bm{\mu}) = \N(\bar{\bm{\mu}}, S)$ and $p(\y\mid X, \bm{\mu}) = \N(X \bm{\mu}, \Sigma)$ with $\y \in \R^{N}, \bar{\bm{\mu}}, \bm{\mu} \in \R^{M}$, $X \in \R^{N\times M}$, and $S, \Sigma \in \R^{M\times M}$. Then 
\begin{equation}
    \E_{q(\bm{\mu})}[\log p(\y\mid X, \bm{\mu})] = \log p(\y\mid X, \bar{\bm{\mu}}) - \frac{1}{2} \tr(\Sigma^{-1} X S X^\top).
\end{equation}
\end{lemma}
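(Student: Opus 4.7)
The plan is to reduce the expectation to a deterministic quadratic-form computation by expanding the log-density around its mean. Write the log-likelihood as
\begin{equation}
\log p(\y\mid X, \bm{\mu}) = -\tfrac{1}{2}(\y - X\bm{\mu})^\top \Sigma^{-1} (\y - X\bm{\mu}) - \tfrac{1}{2}\log\det(2\pi\Sigma),
\end{equation}
and introduce the decomposition $\bm{\mu} = \bar{\bm{\mu}} + \bm{\delta}$ with $\bm{\delta} \sim \N(0, S)$ under $q$. Setting $\bm{e} = \y - X\bar{\bm{\mu}}$, expand
\begin{equation}
(\bm{e} - X\bm{\delta})^\top \Sigma^{-1} (\bm{e} - X\bm{\delta}) = \bm{e}^\top \Sigma^{-1} \bm{e} - 2\bm{e}^\top \Sigma^{-1} X \bm{\delta} + \bm{\delta}^\top X^\top \Sigma^{-1} X \bm{\delta}.
\end{equation}

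Next, take expectation with respect to $q(\bm{\mu})$. The first term is deterministic. The cross term vanishes since $\E_q[\bm{\delta}] = 0$. For the quadratic term, I would apply the standard identity $\E[\bm{\delta}^\top A \bm{\delta}] = \tr(A S)$ for $\bm{\delta} \sim \N(0, S)$, with $A = X^\top \Sigma^{-1} X$, then use cyclicity of trace to rewrite $\tr(X^\top \Sigma^{-1} X S) = \tr(\Sigma^{-1} X S X^\top)$.

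Combining, the expectation equals
\begin{equation}
-\tfrac{1}{2}\bm{e}^\top \Sigma^{-1} \bm{e} - \tfrac{1}{2}\log\det(2\pi\Sigma) - \tfrac{1}{2}\tr(\Sigma^{-1} X S X^\top),
\end{equation}
and the first two terms recombine into $\log p(\y \mid X, \bar{\bm{\mu}})$, which gives the claim. There is no real obstacle here: the only subtlety is being careful with the cyclic rearrangement of the trace so that the final form matches the statement, and noting that the cross term disappears by the zero-mean property of $\bm{\delta}$. The result is essentially the Bayesian linear-regression analogue of the standard variational Gaussian process identity referenced in the preamble, and the derivation will be a few lines.
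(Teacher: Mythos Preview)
Your proof is correct and essentially identical to the paper's: both expand the Gaussian log-density, take the expectation of the quadratic form under $q$, and use the standard identity for the expected quadratic form of a Gaussian to produce the trace term. The only cosmetic difference is that the paper applies the identity directly to the shifted vector $\y - X\bm{\mu} \sim \N(\y - X\bar{\bm{\mu}}, X S X^\top)$ rather than explicitly introducing $\bm{\delta}$ and expanding the cross terms.
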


\begin{proof}
We have 
\begin{align}
    \E_{q(\bm{\mu})}[\log p(\y\mid X\bm{\mu})] &= -\frac{1}{2} \E_{q(\bm{\mu})}[\logdet(2\pi \Sigma) + (\y - X\bm{\mu})^\top \Sigma^{-1} (\y - X\bm{\mu})]\\
    &= -\frac{1}{2} \left(\logdet(2\pi \Sigma) + \E_{q(\bm{\mu})}[(\y - X\bm{\mu})^\top \Sigma^{-1} (\y - X\bm{\mu})] \right)\\
    &= -\frac{1}{2} \left(\logdet(2\pi \Sigma) + (\y - X\bar{\bm{\mu}})^\top \Sigma^{-1} (\y - X\bar{\bm{\mu}}) + \tr(\Sigma^{-1} X S X^\top) \right)
\end{align}
where the last line follows from the fact that $\y - X\bm{\mu} \sim \N(\y - X\bar{\bm{\mu}}, X S X^\top)$. The first two terms form the desired log density. 
\end{proof}

Based on this result, we can state a straightforward corollary for generative classification.

\begin{corollary}
\label{lem:explog}
Let $q(\bm{\mu}) = \N(\bar{\bm{\mu}}, S)$ and $p(\y\mid \bm{\mu}) = \N(\bm{\mu}, \Sigma)$ with $\y, \bar{\bm{\mu}}, \bm{\mu} \in \R^{N}$, $S, \Sigma \in \R^{N\times N}$. Then 
\begin{equation}
    \E_{q(\bm{\mu})}[\log p(\y\mid \bm{\mu})] = \log p(\y\mid \bar{\bm{\mu}}) - \frac{1}{2} \tr(\Sigma^{-1} S).
\end{equation}
\end{corollary}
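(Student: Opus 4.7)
The plan is to obtain this corollary as an immediate specialization of Lemma \ref{lem:explogreg}. Observe that the setup in the corollary matches that of the lemma with the sole difference that the likelihood mean is $\bm{\mu}$ rather than $X\bm{\mu}$, and the dimensions of $\y$ and $\bm{\mu}$ agree (both live in $\R^N$). So I would simply instantiate the lemma with $M = N$ and $X = I_N$, the $N\times N$ identity.

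With this substitution, $p(\y \mid X, \bm{\mu}) = \N(I_N \bm{\mu}, \Sigma) = \N(\bm{\mu}, \Sigma) = p(\y \mid \bm{\mu})$, so the left-hand side of Lemma \ref{lem:explogreg} becomes $\E_{q(\bm{\mu})}[\log p(\y \mid \bm{\mu})]$. On the right-hand side, the log-density term reduces to $\log p(\y \mid \bar{\bm{\mu}})$, and the trace term reduces via $\tr(\Sigma^{-1} I_N S I_N^\top) = \tr(\Sigma^{-1} S)$. Combining these two simplifications gives exactly the stated identity.

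There is essentially no obstacle here: the corollary is a one-line consequence obtained by setting the design matrix to the identity in the more general lemma. The only thing worth flagging is a minor dimension inconsistency in the statement of Lemma \ref{lem:explogreg} (it declares $\Sigma \in \R^{M\times M}$, but for $\y \in \R^N$ it must be $N\times N$); once this is read as $\Sigma \in \R^{N\times N}$, the reduction is immediate.
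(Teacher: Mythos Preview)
Your proposal is correct and matches the paper's own proof exactly: the corollary is obtained from Lemma \ref{lem:explogreg} by setting $X = I$. Your observation about the typo in the dimension of $\Sigma$ in the lemma statement is also accurate but does not affect the argument.
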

\begin{proof} This result follows from Lemma \ref{lem:explogreg} by simply choosing $X=I$. 
\end{proof}

We can also present a variant for multivariate classification. 

\begin{corollary}
\label{lem:explogmult}
Let $q(W) = \MN(\bar{W}, I, S)$ and $p(\y\mid \x, W) = \N(W \x, \Sigma)$ with $\y \in \R^{M}, \bar{W}, W \in \R^{M\times N}$; $x \in \R^{N}$; $S \in \R^{N\times N}$; and $\Sigma \in \R^{M\times M}$. Then 
\begin{equation}
    \E_{q(W)}[\log p(\y\mid \x, W)] = \log p(\y\mid \x, \bar{W}) - \frac{1}{2} \x^\top S \x \,\tr(\Sigma^{-1}).
\end{equation}
\end{corollary}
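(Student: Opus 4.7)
The plan is to mirror the proof of Lemma \ref{lem:explogreg} but replace the Gaussian variational posterior on a vector with a matrix-normal variational posterior on $W$. The key observation that makes this nearly a corollary is that if $W \sim \MN(\bar{W}, I, S)$, then the linear image $W\x$ has an ordinary multivariate Gaussian distribution whose covariance is isotropic in the output space. Specifically, under the convention $\MN(\bar{W}, I, S) \Leftrightarrow \vect(W) \sim \N(\vect(\bar{W}), S \otimes I)$, a direct computation gives
\begin{equation}
W\x \sim \N\bigl(\bar{W}\x,\ (\x^\top S \x)\, I_M\bigr),
\end{equation}
since the rows of $W$ are independent (row covariance is $I$) and each row has covariance $S$, so each coordinate of $W\x$ is independent with variance $\x^\top S \x$.

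Next, I would expand
\begin{equation}
\log p(\y\mid \x, W) = -\tfrac{1}{2}\logdet(2\pi\Sigma) - \tfrac{1}{2}(\y - W\x)^\top \Sigma^{-1}(\y - W\x),
\end{equation}
and take the expectation under $q(W)$. The log-determinant term is constant and passes through. For the quadratic form, I would use the standard identity $\E[\bm{z}^\top A \bm{z}] = \bm{m}^\top A \bm{m} + \tr(A C)$ when $\bm{z} \sim \N(\bm{m}, C)$, applied to $\bm{z} = \y - W\x \sim \N(\y - \bar{W}\x,\, (\x^\top S \x) I_M)$ with $A = \Sigma^{-1}$. This yields
\begin{equation}
\E_{q(W)}\bigl[(\y - W\x)^\top \Sigma^{-1}(\y - W\x)\bigr] = (\y - \bar{W}\x)^\top \Sigma^{-1}(\y - \bar{W}\x) + (\x^\top S \x)\,\tr(\Sigma^{-1}),
\end{equation}
where the trace term used $\tr(\Sigma^{-1}\cdot cI) = c\,\tr(\Sigma^{-1})$ for the scalar $c = \x^\top S \x$.

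Assembling the pieces, the first two terms combine into $\log p(\y\mid \x, \bar{W})$ and the remaining term is $-\tfrac{1}{2}\x^\top S\x\,\tr(\Sigma^{-1})$, which is exactly the claim. The only real subtlety, and the step I would be most careful about, is nailing down the matrix-normal convention so that $W\x$ has scalar covariance $\x^\top S \x$ times the identity rather than something involving $\Sigma$ or a Kronecker factor; once that is established the rest is a direct reuse of the Gaussian quadratic-form expectation as in Lemma \ref{lem:explogreg}.
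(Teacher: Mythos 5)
Your proof is correct and is essentially the paper's argument: both expand the Gaussian log-density and evaluate the expected quadratic form exactly, arriving at the same $\x^\top S \x\,\tr(\Sigma^{-1})$ correction. The only cosmetic difference is that the paper applies the matrix-normal identity $\E[(W-\bar{W})^\top \Sigma^{-1}(W-\bar{W})] = S\,\tr(\Sigma^{-1})$ directly, whereas you first push $q(W)=\MN(\bar{W},I,S)$ through the map $W \mapsto W\x$ to get $W\x \sim \N(\bar{W}\x, (\x^\top S\x) I_M)$ (your convention $\vect(W)\sim\N(\vect(\bar{W}), S\otimes I)$ matches the paper's usage, as confirmed by its predictive $\N(\bar{W}\feat, \feat^\top S \feat\, I + \Sigma)$) and then reuse the vector quadratic-form identity as in Lemma~\ref{lem:explogreg}.
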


\begin{proof}
Our proof closely follows that of Lemma \ref{lem:explogreg}. Expanding the likelihood in the expectation, we have 
\begin{align}
    \E_{q(W)}[\log p(\y\mid \x, W)] = \log p(\y\mid \x, \bar{W}) - \frac{1}{2} \E_{W}[\x^\top (W - \bar{W})^\top \Sigma^{-1} (W - \bar{W}) \x]
\end{align}
Leveraging the matrix normal identity
\begin{equation}
\E_{W\sim{\MN(\bar{W}, V, U)}}[W^\top A W] = U \tr(A^\top V) + \bar{W}^\top A \bar{W}
\end{equation}
and the fact that $W- \bar{W} \sim \MN(0, I, S)$, we have 
\begin{equation}
\E[(W - \bar{W})^\top \Sigma^{-1} (W - \bar{W})] = S \tr(\Sigma^{-1})
\end{equation}
which completes the proof. 
\end{proof}

\begin{lemma}
\label{lem:quad_bound}
Let $p(\x \mid \bm{\mu}) = \N(\bm{\mu}, \Sep)$, and let $\bm{\mu} \sim \N(\bar{\bm{\mu}}, \cov)$. Then,
\begin{equation}
    \E_{\bm{\mu}}[p(\x\mid \bm{\mu})] = \N(\bar{\bm{\mu}}, \Sep + \cov).
\end{equation}
\end{lemma}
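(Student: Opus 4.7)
The plan is to use the standard fact that a sum of independent Gaussian random vectors is itself Gaussian, with means and covariances adding. The expression $\E_{\bm{\mu}}[p(\x \mid \bm{\mu})]$ is precisely the marginal density of $\x$ under the joint distribution defined by $\bm{\mu} \sim \N(\bar{\bm{\mu}}, \cov)$ and $\x \mid \bm{\mu} \sim \N(\bm{\mu}, \Sep)$. So I would first introduce an auxiliary noise variable $\ep \sim \N(0, \Sep)$ independent of $\bm{\mu}$, and observe that $\x \stackrel{d}{=} \bm{\mu} + \ep$. Independence of $\bm{\mu}$ and $\ep$ then gives $\E[\x] = \bar{\bm{\mu}}$ and $\mathrm{Cov}(\x) = \cov + \Sep$, and jointly Gaussianity of $(\bm{\mu}, \ep)$ implies $\x$ is Gaussian with these moments, which is exactly the claim.

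As an alternative (and as a check), I would be prepared to give the direct-integration argument: write the integrand as $\N(\x; \bm{\mu}, \Sep) \N(\bm{\mu}; \bar{\bm{\mu}}, \cov)$ and complete the square in $\bm{\mu}$. Collecting quadratic and linear terms in $\bm{\mu}$ yields a Gaussian factor in $\bm{\mu}$ with precision $\Sep^{-1} + \cov^{-1}$, which integrates to a known normalizer, leaving a Gaussian in $\x$. Simplification of the leftover exponent (using the Woodbury identity $(\Sep^{-1} + \cov^{-1})^{-1} = \Sep - \Sep(\Sep + \cov)^{-1}\Sep$, and the corresponding matrix determinant lemma for the normalizer) produces precisely $\N(\x; \bar{\bm{\mu}}, \Sep + \cov)$.

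The only mild subtlety is justifying that the mixture of Gaussians is itself Gaussian rather than merely having the right first two moments; the cleanest way to see this is via the first (augmented-variable) argument, since $(\bm{\mu}, \ep)$ is jointly Gaussian by independence, and hence any linear combination—including $\x = \bm{\mu} + \ep$—is Gaussian. I expect no real obstacle here; the result is a textbook identity, and the value of stating it as a lemma is just to have a clean reference for the log-sum-exp term appearing in the generative classification ELBO of Theorem~\ref{thm:gclass}.
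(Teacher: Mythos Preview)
Your proposal is correct. Both arguments you sketch are valid and the result is indeed a textbook identity.

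Your primary route, however, is genuinely different from the paper's. The paper does not use the ``sum of independent Gaussians is Gaussian'' observation at all; instead it carries out an explicit integration. It invokes an identity (attributed to Jacobson) for $\E_{\x\sim\N(\bar{\bm{\mu}},\cov)}[\exp(-\tfrac{1}{2}\x^\top\Sep^{-1}\x)]$, substitutes $\z = \x - \bm{\mu}$, applies Woodbury to simplify $\cov^{-1}(\cov - (\Sep^{-1}+\cov^{-1})^{-1})\cov^{-1} = (\cov+\Sep)^{-1}$, and then combines the two resulting log-determinant terms into $\logdet(\tfrac{1}{2\pi}(\Sep+\cov))$. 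Your alternative ``complete the square and use Woodbury'' sketch is in the same spirit as this computation, though the paper routes it through the exponential-quadratic expectation rather than directly completing the square in $\bm{\mu}$. What your augmented-variable argument buys is brevity and conceptual transparency: once you recognize $\x \stackrel{d}{=} \bm{\mu}+\ep$ with $(\bm{\mu},\ep)$ jointly Gaussian, there is nothing left to compute. What the paper's approach buys is a self-contained density-level derivation that does not appeal to distributional closure properties, at the cost of several lines of matrix algebra.
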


\begin{proof}
We build upon \citet{jacobson1973optimal} and note
\begin{equation}
    \label{eq:leqr}
    \E_{\x\sim\N(\bar{\bm{\mu}},\cov)}[\exp(-\frac{1}{2}\x^\top \Sigma^{-1} \x)] = \sqrt{\frac{\det(\cov^{-1})}{\det(\cov^{-1} + \Sigma^{-1})}} \exp(-\frac{1}{2} \bar{\bm{\mu}}^\top \cov^{-1} (\cov - (\Sigma^{-1} + \cov^{-1})^{-1}) \cov^{-1} \bar{\bm{\mu}}).
\end{equation}
Note, by Woodbury's identity
\begin{equation}
    \cov^{-1} (\cov - (\Sigma^{-1} + \cov^{-1})^{-1}) \cov^{-1} = (\cov + \Sep)^{-1}
\end{equation}
Let $\z \defeq \x - \bm{\mu}$, then $\z \sim \N(\x - \bar{\bm{\mu}}, \cov)$. We then have
\begin{align}
    \E[p(\x \mid \bm{\mu})] &= \E[\exp(-\frac{1}{2}\|\x - \bm{\mu}\|^2_{\Sep^{-1}} + \frac{1}{2}\logdet (2 \pi \Sep^{-1}))]\\
    &= \E[\exp(-\frac{1}{2} \z^\top \Sep^{-1} \z)]]\exp(\frac{1}{2}\logdet (2 \pi \Sep^{-1})) \label{eq:z_quad}
\end{align}
For the expectation we apply \eqref{eq:leqr}. We simplify the determinant term of \eqref{eq:leqr} as
\begin{equation}
    \sqrt{\frac{\det(\cov^{-1})}{\det(\cov^{-1} + \Sigma^{-1})}} = \exp(-\frac{1}{2} \logdet(I + \cov \Sep^{-1}))
\end{equation}
Combining, we have 
\begin{equation}
    \E[\exp(-\frac{1}{2} \z^\top \Sep^{-1} \z)]] = \exp(-\frac{1}{2}(\|\x - \bar{\bm{\mu}}\|^2_{(\cov + \Sep)^{-1}} + \logdet(I + \cov \Sep^{-1}))
\end{equation}
We have two log determinant terms, from \eqref{eq:z_quad} and the above. We can combine them as 
\begin{align}
    \frac{1}{2}\logdet (2 \pi \Sep^{-1}) - \frac{1}{2} \logdet(I + \cov \Sep^{-1}) &= -\frac{1}{2} (\logdet(\frac{1}{2\pi} \Sep) + \logdet(I + \cov \Sep^{-1}))\\
    &= -\frac{1}{2} \logdet((\frac{1}{2\pi} \Sep)(I + \cov \Sep^{-1}))\\
    &= -\frac{1}{2} \logdet(\frac{1}{2\pi} \Sep + \frac{1}{2\pi} \cov)
\end{align}
Combining all terms completes the proof.
\end{proof}

\subsection{Proof of Theorem \ref{thm:reg}}
\label{sec:thm1}

\setcounter{theorem}{0}

\begin{proof}
First, 
\begin{align}
    \log p(Y \mid X, \weights) &= \log \E_{p(\p)}[p(Y \mid X, \p, \weights)]\\
    &= \log \E_{q(\p\mid \stats)}[p(Y \mid X, \p, \weights) \frac{p(\p)}{q(\p\mid \stats)}]\\
    &\geq \E_{q(\p\mid \stats)}[ \log p(Y \mid X, \p, \weights)] - \KL(q(\p\mid \stats) || p(\p))\\
    &= \sum_{t=1}^T \E_{q(\p\mid \stats)}[\log p(\y_t \mid \x_t, \p, \weights)] - \KL(q(\p\mid \stats) || p(\p)) \label{eq:elbo_regression}.
\end{align}
Note that the first term in the last line is the log of a Normal distribution. Applying Lemma \ref{lem:explog}, we have 
\begin{equation}
    \E_{q(\p\mid \stats)}[\log p(\y_t \mid \x_t, \p, \weights)] = \log p(\y_{t} \mid \x_{t}, \p, \weights) - \frac{1}{2} \feat_t^\top \cov \feat_t \Sep^{-1}
\end{equation}
which completes the proof. 
\end{proof}

We can also state the following corollary for the multivariate case.

\setcounter{theorem}{7}

\begin{corollary}
Let $q(\p\mid \stats) = \MN(\bar{\Param}, I, \cov)$ denote the variational posterior for the multivariate BLL model defined in Appendix \ref{sec:multivariate}. Then, \eqref{eq:elbo} holds with 
\begin{equation}
    \loss(\weights, \stats, \Sep) = \frac{1}{T} \sum_{t=1}^T \left( \log \N(\y_{t} \mid \bar{\Param}\feat_{t}, \Sep) - \frac{1}{2} \feat_t^\top \cov \feat_t \tr(\Sep^{-1}) \right ).
\end{equation}
\end{corollary}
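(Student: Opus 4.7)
The plan is to follow the same two-step structure as in the proof of Theorem \ref{thm:reg}, but invoke the matrix-normal version of the expected log-likelihood lemma (Corollary \ref{lem:explogmult}) instead of its vector counterpart.

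First, I would apply the usual Jensen-based ELBO derivation to the marginal likelihood: write $p(Y \mid X, \weights) = \E_{p(\p)}[p(Y \mid X, \p, \weights)]$, introduce the variational posterior $q(\p\mid\stats) = \MN(\bar{\Param}, I, \cov)$ as an importance distribution, and use Jensen's inequality to obtain
\begin{equation}
\log p(Y\mid X, \weights) \;\geq\; \sum_{t=1}^{T} \E_{q(\p\mid\stats)}\!\left[\log p(\y_t \mid \x_t, \p, \weights)\right] - \KL(q(\p\mid\stats)\,\|\,p(\p)).
\end{equation}
This step is identical to the univariate case and uses independence of the observations given $\p$ and $\weights$ to split the log-likelihood across $t$.

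Next, for each fixed $t$, the conditional is $p(\y_t \mid \x_t, \Param, \weights) = \N(\Param \feat_t, \Sep)$, which is exactly the linear-Gaussian form required by Corollary \ref{lem:explogmult} with $W = \Param$, $\x = \feat_t$, $\y = \y_t$, and $S = \cov$. Applying that corollary gives
\begin{equation}
\E_{q(\p\mid\stats)}\!\left[\log p(\y_t\mid \x_t, \p, \weights)\right] \;=\; \log \N(\y_t \mid \bar{\Param}\feat_t, \Sep) - \tfrac{1}{2}\, \feat_t^\top \cov \feat_t \,\tr(\Sep^{-1}).
\end{equation}
Substituting this into the ELBO above, dividing by $T$, and identifying the bracketed expression as $\loss(\weights, \stats, \Sep)$ yields the claim and matches the form of \eqref{eq:elbo}.

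There is essentially no obstacle here, since all of the heavy lifting is done by Corollary \ref{lem:explogmult}; the only thing worth double-checking is the matrix-normal identity $\E_{W\sim \MN(\bar{W},V,U)}[W^\top A W] = U\,\tr(A^\top V) + \bar{W}^\top A \bar{W}$ invoked in that corollary's proof, which collapses neatly to $\cov \,\tr(\Sep^{-1})$ precisely because the row covariance is taken to be the identity in our chosen variational family. The use of $V = I$ is what produces the scalar $\tr(\Sep^{-1})$ multiplier (rather than a more complicated cross term) and is what keeps the objective at the advertised quadratic complexity.
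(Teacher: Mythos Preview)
Your proposal is correct and follows essentially the same route as the paper's own proof, which simply states that one repeats the argument of Theorem~\ref{thm:reg} while replacing the scalar expected-log-likelihood result by Corollary~\ref{lem:explogmult}. Your additional remark about why the identity row covariance $V=I$ yields the scalar $\tr(\Sep^{-1})$ factor is accurate and a nice clarification, but the core argument is identical.
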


\begin{proof}
The proof follows the proof of Theorem \ref{thm:reg}, applying Corollary \ref{lem:explogmult} instead of Lemma \ref{lem:explog}.
\end{proof}

\setcounter{theorem}{1}

\subsection{Proof of Theorem \ref{thm:dclass}}

\begin{proof}
We construct an ELBO via
\begin{align}
    \log p(Y \mid X, \weights) &= \log \E_{p(\p)} [ p(Y \mid X, \weights,  \p) ]\\
    &\geq \E_{q(\p\mid \stats)} [ \log p(Y \mid X, \weights,  \p) ] - \KL(q(\p\mid \stats) || p(\p))\\
    &= \sum_{t=1}^T \E_{q(\p\mid \stats)} [ \y_t^\top \log \sm_{\y} ( \log p(\x_t, \y \mid \weights,  \p) ) ] - \KL(q(\p\mid \stats) || p(\p)) \label{eq:class_elbo}
\end{align}

Expanding the log-softmax term, we have 
\begin{align}
    \E_{q(\p\mid \stats)} &\left[ \y_t^\top \log \sm_{\y} ( \log p(\x_t, \y \mid \weights,  \p) ) \right] = \label{eq:lse_term}\\
    & \E_{q(\p\mid \stats)} [ \y_t^\top \log p(\x_t , \y  \mid \weights,  \p) ) ] - \E_{q(\p\mid \stats)} [\LSE_{\y } [\log p(\x_t , \y  \mid \weights,  \p) ]. \nonumber 
\end{align}
As previously, under the variational posterior these likelihoods factorize across the data.
The first term may be directly evaluated, yielding
\begin{equation}
    \E_{q(\p\mid \stats)} [ \log p(\x_t , \y  \mid \weights,  \p) ) ] = \E_{q(\p\mid \stats)} [ \param_{\y}^\top] \feat  = \mparam_{\y}^\top \feat . 
\end{equation}
The second term (containing the log-sum-exp) can not be computed exactly, and so we will bound this term for both the discriminative and generative classifiers. Via Jensen's inequality, we have 
\begin{equation}
 -\E_{q(\p\mid \stats)} [\LSE_{\y } [\log p(\x_t , \y  \mid \weights,  \p) ] \geq -\log \sum_{\y } \E_{q(\p\mid \stats)} [ \exp( \log p(\x_t , \y  \mid \weights,  \p) ) ] \label{eq:jensen2}
\end{equation}
In the case of the discriminative model, we follow \citet{blei2007correlated} and note that for each row $k$
\begin{equation}
\E_{\param_k\sim\N(\bar{\param}_k, S_k)} [ \exp(\param_k^\top \feat_t  + \ep_k) ] = \exp(\bar{\param}_k^\top \feat_t + \frac{1}{2} (\feat_t^\top S_k \feat_t + \sep_k^2))
\end{equation}
which relies on assumed independence of rows of $\Param$ (although relaxation of this assumption is possible). Combining these results yields a lower bound on the ELBO, which is itself a lower bound on the marginal likelihood. 
\end{proof}

\subsection{Proof of Theorem \ref{thm:gclass}}

\begin{proof}
Note that 
\begin{align}
    \log p(Y \mid X, \weights) &\geq \E_{q(\p\mid \stats)}[ \log p(Y \mid X, \p, \weights)] - \KL(q(\p\mid \stats) || p(\p))
\end{align}    
where
\begin{align}    
    \label{eq:var_three_terms}
    \E_{q(\p\mid \stats)}[ \log p(Y \mid X, \p, \weights)]= \E_{q(\p\mid \stats)}\left[\log p(X \mid Y, \weights,  \p) - \log p(X \mid \weights,  \p)\right] + \E_{q(\p\mid \stats)}[ \log p(Y \mid  \p)]
\end{align}
All of these terms factorize over the data, as previously. 
We first note that for the last term, 
\begin{equation}
\E_{\classp}[\log p(\y_t\mid \weights,  \classp)] = \psi(\dirscale_{\y_t}) - \psi( \sum_{\y} \dirscale_{\y})    
\end{equation}
where $\dirscale$ correspond to posterior Dirichlet concentration parameters and $\psi(\cdot)$ denotes the digamma function. The first term in \eqref{eq:var_three_terms} is the embedding likelihood; we can compute this expectation of the log likelihood via Corollary \ref{lem:explog}.

The second term in \eqref{eq:var_three_terms} is less straight-forward. Note that 
\begin{equation}
\label{eq:dir}
    \E[\log p(\x_t \mid \weights,  \p)] = \E [\log \sum_{\y} p(\x_t \mid \y, \weights,  \p) p(\y\mid \weights,  \p)]
\end{equation}
which can be written as a log-sum-exp of log joint likelihood. 
We will again apply Jensen's to exchange the log and sum, and note
\begin{align}
    -\E[\log p(\x_t \mid \weights,  \p)] &= -\E [\log \sum_{\y} p(\x_t \mid \y, \weights,  \p) p(\y\mid \weights,  \p)]\\
    &\geq  -\log \E [\sum_{\y} p(\x_t \mid \y, \weights,  \p) p(\y\mid \weights,  \p)]\\
     &= - \log \sum_{\y}\E_{\mfeat_{\y}}[p(\x_t \mid \mfeat_{\y}, \weights)] \E_{\classp}[p(\y\mid \classp)]\\
    &= - \LSE_{\y}[\log \E_{\classp}[p(\y\mid \classp)] + \log\E_{\mfeat_{\y}}[p(\x_t \mid  \mfeat_{\y}, \weights)]]     \label{eq:factor_class}
\end{align}
where the second line follows from Jensen's, and the third line follows from the structure of the variational posterior. 
We may apply
\begin{equation}
    \label{eq:logdir}
    \log \E_{\classp}[p(\y_t\mid \weights,  \classp)] = \log \dirscale_{\y_t} - \log \sum_{\y} \dirscale_{\y},
\end{equation}
a standard result from Dirichlet-Categorical marginalization.
The second term in \eqref{eq:logdir} (the sum over concentration parameters) is equivalent for all classes $\y$, and thus can be pulled out of the log-sum-exp (due to the equivariance of this function under shifts) where it cancels the same third term in \eqref{eq:var_three_terms}.

To compute the second expectation in \eqref{eq:factor_class}, we apply Lemma \ref{lem:quad_bound}. Combining all terms completes the proof. 
\end{proof}

\end{document}